\newtheorem{assumption}[theorem]{Assumption}
\newtheorem{fact}[theorem]{Fact}
\newcommand{\eat}[1]{}
\newtcolorbox[list inside=tcb]{mytcb}[1][]{#1}
\DeclareMathOperator{\prox}{prox}
\DeclareMathOperator{\proj}{proj}
\DeclareMathOperator{\dist2}{dist}
\begin{document}

\title{Fast and Provably Convergent Algorithms for Gromov-Wasserstein  in  Graph Data }

\author{\name Jiajin Li \email jiajinli@stanford.edu \\
       \addr 
       Stanford University
       \AND
       \name Jianheng Tang \email jtangbf@connect.ust.hk  \\
       \addr Hong Kong University of Science and
Technology 
\AND 
\name Lemin Kong \email lkong@se.cuhk.edu.hk \\
\addr The Chinese University of Hong Kong
\AND
\name Huikang Liu \email liuhuikang@sufe.edu.cn\\
\addr Shanghai
University of Finance and Economics.
\AND 
\name Jia Li \email jialee@ust.hk \\
\addr Hong Kong University of Science and
Technology 
\AND
\name Anthony Man-Cho So \email manchoso@se.cuhk.edu.hk\\
\addr The Chinese University of Hong Kong
 \AND
\name Jose Blanchet \email jose.blanchet@stanford.edu\\
\addr Stanford University 
}

\editor{}

\maketitle

\begin{abstract}
% We propose the first provable single-loop algorithm for computing the Gromov-Wasserstein (GW) distance. We call it Bregman Alternating Projected Gradient (BAPG) method. Our analysis  is based on a surprising observation that the GW problem satisfies a so-called Luo-Tseng error bound condition~\citep{luo1992error}, which relates to estimating the distance of a point to the critical point set of the GW problem based on the optimality residual. Armed with this error bound condition, we are able to show that BAPG converges to a  critical point of the GW problem asymptotically (as step size goes to infinity), and further give an approximation bound for the distance between the fixed-point set of BAPG and the critical point set of GW explicitly. We conduct comprehensive numerical experiments to validate the effectiveness of BAPG on graph alignment and partition tasks. In terms of both wall-clock time and quality of solutions, the proposed method achieves state-of-the-art results.
In this paper, we study the design and analysis of a class of efficient algorithms for computing the Gromov-Wasserstein (GW) distance tailored to large-scale graph learning tasks.  Armed with the Luo-Tseng error bound condition~\citep{luo1992error}, two proposed algorithms, called Bregman Alternating Projected Gradient (BAPG) and hybrid Bregman Proximal Gradient (hBPG) 
 enjoy the convergence guarantees.
Upon task-specific properties, our analysis further provides novel theoretical insights to guide how to select the best-fit method.
As a result, we are able to provide comprehensive experiments to validate the effectiveness of our methods on a host of tasks, including graph alignment, graph partition, and shape matching. In terms of both wall-clock time and modeling performance, the proposed methods achieve state-of-the-art results.
\end{abstract}

\begin{keywords}
  Gromov-Wasserstein, Graph Data, Nonconvex Optimization 
\end{keywords}

\section{Introduction}

% \blindmathpaper

The Gromov-Wasserstein (GW) distance provides a flexible way to compare and couple probability distributions supported on different metric spaces. %I think GW requries a ground transportation cost, so I'm not sure if only the topological aspects are really only required. Plus, this isn't really necessary for you anyway, I think.
As such, we have witnessed a fast-growing body of literature that applies the GW distance to various structural data analysis tasks, e.g., 2D/3D shape matching~\citep{peyre2016gromov,memoli2004comparing,memoli2009spectral}, molecule analysis~\citep{vayer2018fused,titouan2019optimal}, graph alignment and partition~\citep{chowdhury2019gromov,xu2019gromov,xu2019scalable,chowdhury2021generalized,gao2021unsupervised}, graph embedding and classification~\citep{vincent2021online, xu2022representing}, generative modeling~\citep{bunne2019learning,xu2021learning}, to name a few.

% limitations  
Although the GW distance has received much attention in both machine learning and data science communities, there are still few results that focus on the design of practically efficient algorithms with provable convergence guarantees.
Existing algorithms either are double-loop ({i.e., requiring another iterative algorithm as  a subroutine at each iteration}) or do not have any theoretical justification. Recently, \citet{solomon2016entropic}
has proposed an entropy-regularized iterative sinkhorn projection algorithm called eBPG, which is proven to converge under the Kurdyka-\L{}ojasiewicz framework~\citep{attouch2010proximal,attouch2013convergence}. 
However, eBPG has several crucial drawbacks that prevent its widespread adoption. First, instead of tackling the GW problem directly, eBPG addresses an entropic regularized GW objective, whose regularization parameter affects the model performance dramatically.
Second, eBPG has to solve the entropic optimal transport problem~\citep{peyre2019computational,benamou2015iterative} at each iteration via another iterative algorithm~\citep{cuturi2013sinkhorn}. Thus, eBPG is neither computationally efficient nor practically robust. To solve the GW problem directly, 
\cite{xu2019gromov} proposes the Bregman projected gradient (BPG). Unfortunately, BPG is still a double-loop algorithm that also invokes the sinkhorn as its subroutine. Notably, due to the lack of the entropic regularizer, the inner solver suffered from the numerical instability issue.
On another front, \citet{titouan2019optimal,memoli2011gromov} have introduced the Frank-Wolfe (FW) method (see~\citep{jaggi2013revisiting,lacoste2016convergence} for recent treatments) to solve the GW problem. Nevertheless, in their implementation, they still rely on off-the-shelf linear programming solvers and line-search schemes, which are not well-suited for even medium size tasks.  
To get rid of the computational burden and sensitivity issue arising from the double-loop scheme,
it is only recently that \cite{xu2019gromov} has further developed a simple heuristic single-loop method (BPG-S) based on BPG, which presented an attractive empirical performance on the node correspondence task. However, such a heuristic still lacks theoretical support, and thus it is not necessarily guaranteed to perform well (or even converge) under naturally noisy observations. 
\begin{figure}[t!]
	\centering
	\includegraphics[width=\textwidth]{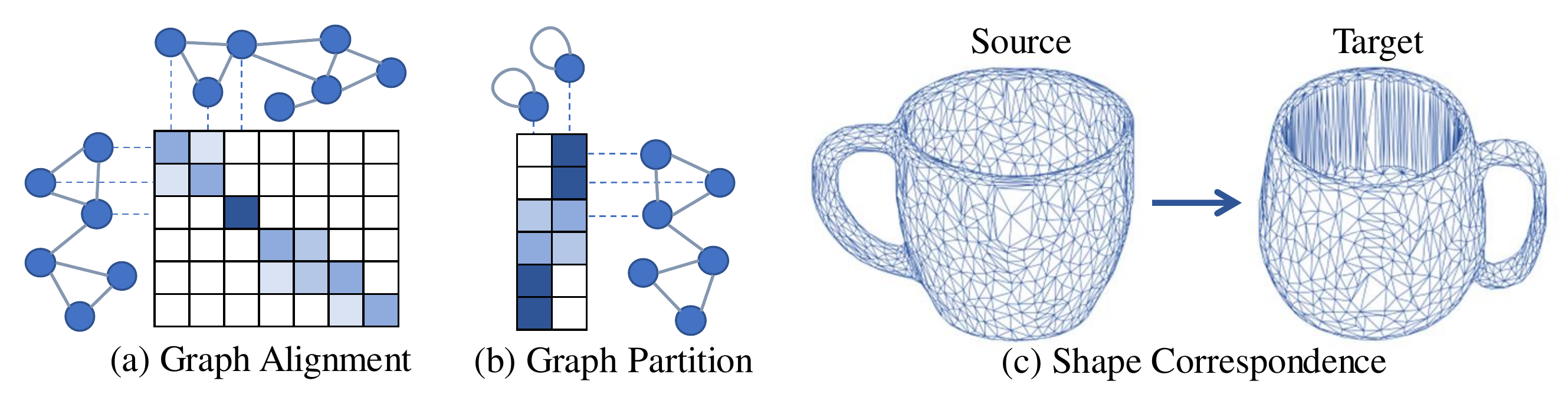}
\vspace{-4mm}
\caption{Graph learning tasks conducted in this paper.}\label{fig:exp_intro}
\vspace{-2mm}
\end{figure}

To bridge the above theoretical-computational gap, we develop provably efficient iterative methods for the GW problem. 
Specifically, we propose two theoretically solid algorithms tailored to different graph learning tasks. 
Our first algorithm is the Bregman Alternating Projected Gradient (\textbf{BAPG}), which is also the first provable single-loop algorithm in the GW literature. The computational stumbling block here is the Birkhoff polytope constraint (i.e., polytope of doubly stochastic matrices) for the coupling matrix, as the associated Bregman projection is either computationally expensive or practically sensitive to hyperparameters. To address this issue, 
we are inspired to decouple the Birkhoff polytope as simplex constraints for rows and columns separately, and then execute the projected gradient descent in an alternating fashion. 
By leveraging the closed-form Bregman projection of the simplex constraint, BAPG only involves the matrix-vector/matrix-matrix multiplications and element-wise matrix operations at each iteration. In short, the proposed single-loop algorithm (BAPG) enjoys a variety of convenient properties: GPU-friendly implementation,  robustness concerning the step size (i.e., the only hyper-parameter), and low memory cost.   Nonetheless, the iterates generated by BAPG do not necessarily satisfy the Birkhoff polytope constraint and can only reach a critical point of the original GW problem asymptotically. To complement BAPG and avoid such a drawback, we revisit BPG from a fresh perspective and introduce our second algorithm --- hybrid Bregman Projected Gradient (\textbf{hBPG}). In particular, we apply eBPG to get a good initial point and then use BPG to get to a critical point. Taking advantage of both eBPG and BPG, the resulting hBPG achieves a great balance between accuracy and efficiency. 

Next, we investigate the convergence behavior of the proposed BAPG and hBPG. A key fact here is that the GW problem is a symmetric nonconvex quadratic program with Birkhoff polytope constraints. By fully exploring this structure, it is interesting to note that the GW problem satisfies the  Luo-Tseng error bound condition~\citep{luo1992error}, which plays a vital role in our analysis. We first quantify the approximation bound for the fixed-point set of BAPG explicitly and the subsequent convergence result follows. Moreover, we prove the local linear convergence result of BPG and hBPG under the established local error bound property.

As a result of the developed theoretical results and algorithm characteristics, we are able to provide novel insights to help users to select the most suitable algorithm. 
In fact, BAPG is the best fit for the graph alignment and partition tasks (see Fig. \ref{fig:exp_intro} (a) and (b) for details), which can sacrifice some feasibility to gain both matching accuracy (i.e., performance measure) and computational efficiency. By contrast, hBPG is more suitable for the shape matching task (i.e.,  Fig. \ref{fig:exp_intro} (c)), where one of the quality metrics is the sharpness of the matching coupling. Hence, BAPG would be the sub-optimal choice due to its infeasibility issue. 
In terms of wall-clock time and modeling performance, extensive experiment results have shown that our methods consistently achieve superior performance on all graph learning tasks.
It is also well worth noting that all theoretical insights have been well-supported by our experiment results. To sum up, this paper opens up an exciting avenue for realizing the benefits of GW in graph learning analysis.

\section{Proposed Algorithms}
\label{sec:method}

In this section, we first formulate the GW distance as a nonconvex quadratic problem with Birkhoff polytope constraints. Then, we introduce two proposed algorithms — BAPG and
hBPG, and further analyze their computational properties
and applicable scenarios.

\subsection{Problem Setup}

The Gromov-Wasserstein (GW) distance was originally proposed in \citep{memoli2011gromov,memoli2014gromov,peyre2019computational} for quantifying the distance between two probability measures supported on heterogeneous metric spaces. More precisely:
\begin{definition}[GW distance]
    \label{defi:gw}
    Suppose that we are given two unregistered compact metric spaces $(\mathcal{X},d_X)$, $(\mathcal{Y},d_Y)$ accompanied with Borel probability measures $\mu,\nu$ respectively. The GW distance between $\mu$ and $\nu$ is defined as 
    \[
    \inf_{\pi \in \Pi(\mu,\nu)}  \iint |d_X(x,x')-d_Y(y,y')|^2 d\pi(x,y)d\pi(x',y'),
    \]
    where $\Pi(\mu,\nu)$ is the set of all probability measures on $\mathcal{X}\times\mathcal{Y}$ with $\mu$ and $\nu$ as marginals. 
 \end{definition}

Intuitively, the GW distance is trying to preserve the isometric structure between two probability measures under optimal derivation. If a map pairs $x\rightarrow y$ and $x'\rightarrow y'$, then the distance between $x$ and $x'$ is supposed to be close to the distance between $y$ and $y'$. In view of these nice properties, the GW distance acts as a powerful modeling tool in structural data analysis, especially in graph learning; see, e.g., \citep{vayer2019sliced,xu2019gromov,xu2019scalable,solomon2016entropic,peyre2016gromov} and the references therein.

To start with our algorithmic developments, we consider the discrete case for simplicity and practicality,  where $\mu$ and $\nu$ are two empirical distributions, i.e., $\mu = \sum_{i=1}^n \mu_i \delta_{x_i}$ and  $\nu = \sum_{j=1}^m \nu_j \delta_{y_j}$. Then, the GW-distance admits the following reformulation:
	\begin{equation}
	\label{eq:gw_qua}
	\begin{aligned}
	 & \min_{\pi \in \mathbb{R}^{n \times m}} -\text{Tr}(D_X\pi D_Y\pi^T)\\
	& \quad \,\text{s.t.} \quad\pi \mathbf{1}_m = \mu, \, \pi^T\mathbf{1}_n = \nu, \,  \pi \ge 0, 
	\end{aligned}
	\end{equation}
where $D_X$ and $D_Y$ are two symmetric distance matrices. 

\subsection{Bregman Alternating Projected Gradient (BAPG)}
Next, we present the proposed Bregman alternating projected gradient (BAPG) method, which is the first provable single-loop algorithm tailored to the GW distance computation. It is easy to observe that Problem \eqref{eq:gw_qua} is a nonconvex quadratic program with polytope constraints. Naturally, if we want to invoke  (Bregman) projected gradient descent type algorithms to address \eqref{eq:gw_qua}, the key difficulty lies in the Birkhoff constraint with respect to the coupling matrix $\pi$. In fact, 
all existing theoretically sound algorithms rely on an inner iterative algorithm to tackle the regularized optimal transport problem at each iteration, such as Sinkhorn \citep{cuturi2013sinkhorn} or semi-smooth Newton method~\citep{li2020efficient}. Arguably, one of the main drawbacks of the double-loop scheme is its computational burden, which is not GPU friendly. 
% That is,  the projection onto the polytope  doubly stochastic matrices is computationally expensive  under the vanilla Euclidean geometry~\citep{li2020efficient}. 
% Thus, we are motivated to handle the row and columns constraints separately.
To circumvent this drawback, we are motivated to handle the row and columns constraints separately (i.e., $C_1=\{\pi\ge 0: \pi \mathbf{1}_m = \mu\}$ and $C_2=\{\pi\ge 0: \pi^T \mathbf{1}_n = \nu\}$).  The crux of BPAG is to take the alternating projected gradient descent step between $C_1$ and $C_2$. Moreover, by fully detecting the hidden structures, we are able to further benefit from the fact that the Bregman projection with respect to the negative entropy for the simplex constraint (e.g., $C_1$ and $C_2$) can be extremely efficient computed~\citep{krichene2015efficient}. The above considerations lead to the closed-form updates in each iteration of BAPG:
\begin{mytcb}[title= BAPG]
\begin{equation}
\label{eq:bapg_update_c}
\begin{aligned}
& \pi \leftarrow \pi \odot \exp({D_X \pi D_Y}/{\rho}), \quad \pi \leftarrow  \text{diag}(\mu./\pi\mathbf{1}_m) \pi, \\
&\pi \leftarrow \pi \odot \exp({D_X \pi D_Y}/{\rho}), \quad 
\pi \leftarrow\pi  \text{diag}(\nu./{\pi}^T\mathbf{1}_n),
\end{aligned}
\end{equation}
\end{mytcb}
where $\rho$ is the step size and $\odot$
denotes element-wise (Hadamard) matrix multiplication. BAPG enjoys several nice properties that are extremely attractive for medium or large-scale graph learning tasks. First, BAPG is a single-loop algorithm that only involves matrix-vector/matrix-matrix multiplications and element-wise matrix operations. All these operations are GPU-friendly. Second, different from the entropic regularization parameter in eBPG, BAPG is more robust to the step size $\rho$ in terms of solution performance. Third, BAPG only involves one memory operation of a large matrix with size $nm$. Notably, even for large-scale optimal transport problems, the main bottleneck is not floating-point computations, but rather time-consuming memory operations~\citep{mai2021fast}.

\paragraph{Theoretical Insights for BAPG}
Now, we start to give the theoretical intuition of why BAPG will work well in practice.
For simplicity, we consider the compact form for illustration:
\begin{equation}
\label{eq:gw_compact}
    \min_{\pi} f(\pi)+g_1(\pi)+g_2(\pi). 
\end{equation}
Here, $f(\pi) =-\text{Tr}(D_X\pi D_Y\pi^T)$ is a nonconvex quadratic function;  $g_1(\pi)=\mathbb{I}_{\{\pi \in C_1\}}$ and 
$g_2(\pi)= \mathbb{I}_{\{\pi \in C_2\}}$ are two indicator functions over closed convex polyhedral sets.
 To better understand the alternating projected scheme developed in \eqref{eq:bapg_update_c},
we adopt the operator splitting strategy to reformulate \eqref{eq:gw_compact} as
\begin{equation}
\label{eq:gw_bilinear}
\begin{aligned}
	 & \min_{ \pi = w}\, f(\pi,w) +g_1(\pi)+g_2(w)\\
% 	& \,\, \text{s.t.} \quad  \pi = w,
	\end{aligned}
	\end{equation}
where  $f(\pi,w) = -\text{Tr}(D_X\pi D_Yw^T)$.  Then, the BAPG update \eqref{eq:bapg_update_c} can be interpreted as processing the alternating minimization scheme on the constructed penalized function, i.e.,
\begin{align*}
& F_\rho(\pi,w) = f(\pi,w)+g_1(\pi)+g_2(w)+\rho D_h(\pi,w).
\end{align*}
Here, $D_h(\cdot,\cdot)$ is the so-called Bregman divergence, i.e.,
$
D_{h}(x, y):=h(x)-h(y)-\langle\nabla h(y), x-y\rangle,
$
where $h(\cdot)$ is the Legendre function, e.g., $\tfrac{1}{2}\|x\|^2$, relative entropy $x\log x$, etc.
For the $k$-th iteration, the BAPG update takes the form
\begin{equation}
\label{eq:bapg_general}
\begin{aligned}
\pi^{k+1} & = \mathop{\arg\min}_{\pi \in C_1}\left\{ f(\pi,w^k) + \rho D_h(\pi, w^k)\right\}, \\
w^{k+1} & = \mathop{\arg\min}_{w\in C_2}\left\{f(\pi^{k+1},w) + \rho D_h(w,\pi^{k+1})\right\}.
\end{aligned}
\end{equation}
When $h$ is the relative entropy and we use the same variable to update $\pi$ and $w$,  \eqref{eq:bapg_update_c} can be derived from \eqref{eq:bapg_general}.

It is worth noting that the way to invoke the operator splitting strategy is not for the algorithm design as usual but instead to provide an intuitive way to explain why BAPG will work. 
As we shall see later, it plays an important role in deriving the approximation bound of the fix-point set of BAPG instead. Moreover,
similar to the quadratic penalty method \citep{nocedal2006numerical}, BAPG is an infeasible method that can only converge to a critical point of \eqref{eq:gw_qua} in an asymptotic sense. In other words, if we choose the parameter $\rho$ as a constant, there is always an infeasibility gap. Fortunately, BAPG is already able to achieve the desired performance for some graph learning tasks, especially those that care more about implementation efficiency and matching accuracy. Recently, \citep{vincent2021semi} has proposed a relaxed version of GW distance for the graph partition task, which further corroborates our investigation that it is acceptable and promising to sacrifice some feasibility to gain other benefits. 
Moreover, \cite{sejourne2021unbalanced} has also proposed a closely related marginal relaxation. That is, we make $\pi = w$ and $F_\rho(\pi,\pi)$ is the objective introduced in \citep{sejourne2021unbalanced}. Unfortunately, they did not develop any provable algorithms for the unbalanced GW distance. 
In Sections \ref{sec:graph_align} and \ref{sec:graph_partition}, extensive experiments have been conducted to demonstrate that BAPG has superior performance compared with other existing baselines on graph alignment and partition tasks.

\subsection{Hybrid Bregman Projected Gradient (hBPG)}
To remedy the infeasible issue of BAPG, we revisit the  Bregman proximal gradient descent (BPG) method, which is a feasible method for addressing the original problem \eqref{eq:gw_qua} exactly. Such an approach has already been well-explored in some early works \citep{xu2019gromov}. 
For the $k$-th iteration, BPG takes the form 
\begin{equation}
\label{eq:bpg_update}
\pi^{k+1} = \mathop{\arg\min}_{\pi\in C_1\cap C_2} \left\{ \nabla f(\pi^k)^T \pi + \frac{1}{t_k}D_h(\pi,\pi^k) \right\}, 
\end{equation}
where $t_k$ is the chosen step size. The core difficulty here is the need for an inner solver to tackle \eqref{eq:bpg_update} efficiently. As it turns out, without the entropic regularizer, BPG suffers from the numerical instability issue. It is difficult for the inner solver to achieve the desired accuracy so as to guarantee convergence. Although \cite{xu2019gromov} has provided a  subsequential convergence result for BPG, it is too weak to guide the user in which scenarios BPG will potentially enjoy notable advantages.
Such a state of affairs greatly limits its applicability. As we shall see later in Section \ref{sec:bpg},  we explicitly analyze this stability issue  and explain the reason rigorously.
Different from graph alignment and partition tasks studied in \cite{xu2019gromov,xu2019scalable}, BPG-type methods are more attractive for applications that require a sharp matching map (such as shape correspondence), since the approximation (infeasibility) gap will dramatically affect the modeling performance. 

We further exploit its local linear convergence property. Such a property guides us to take full advantage of both methods --- BPG and eBPG. A natural idea is to apply eBPG to get a good initial point and then use BPG to reach critical points.  It is reasonable to infer that the resulting hybrid method (denoted by hBPG) will achieve a trade-off between accuracy and efficiency.
Comprehensive experiments conducted in Section \ref{sec:shape} corroborate our theoretical insights.

At last, we summarize all the aforementioned algorithms in Table \ref{tab:algo}, aiming to provide a theoretically-supported guideline for readers on how to select the best-fit method based on their task properties. 
\begin{table}
\caption{Comparison of various algorithms. ``Exactly solve'' means that  algorithms can reach a critical point of the original GW problem instead of the approximation problem.  }
\centering
\begin{tabular}{@{}l|c|c|c@{}}
\toprule
& Single-loop? & Provable? & Exactly solve \eqref{eq:gw_qua}?\,\\
      \midrule
\, eBPG~\citep{solomon2016entropic} &  \textcolor{blue!50!black}{No} & \textcolor{red!50!black}{Yes} &  \textcolor{blue!50!black}{No}\\
\, BPG~\citep{xu2019gromov} & \textcolor{blue!50!black}{No} & \textcolor{red!50!black}{Yes}&\textcolor{red!50!black}{Yes}\\
\, BPG-S~\citep{xu2019scalable} & \textcolor{red!50!black}{Yes} &  \textcolor{blue!50!black}{No}&  \textcolor{blue!50!black}{No}\\
\, FW~\citep{titouan2019optimal}&\textcolor{blue!50!black}{No} & \textcolor{red!50!black}{Yes}&\textcolor{red!50!black}{Yes} \\
\, BAPG (\textbf{our method}) & \textcolor{red!50!black}{Yes}& \textcolor{red!50!black}{Yes}& \textcolor{blue!50!black}{No}\\
      \bottomrule
\end{tabular}
\label{tab:algo}
\end{table}
\section{Theoretical Results}
\label{sec:conv}
In this section, we present all theoretical results conducted in this paper, including the approximation bound of the fix-point set of BAPG and its convergence analysis.
At the heart of our analysis is that  the following regularity condition holds for the GW problem~\eqref{eq:gw_qua}. 
% conduct the convergence analysis of the proposed BAPG. Our main tool is the following regularity condition:
\begin{proposition}[Luo-Tseng Error Bound Condition for \eqref{eq:gw_qua}] 
\label{prop:erb}
There exist scalars $\epsilon>0$ and $\tau >0$ such that 
\begin{equation}
\label{eq:erb}
\dist2(\pi,\mathcal{X}) \leq \tau \left\|\pi - \proj_{C_1\cap C_2}(\pi+D_X \pi D_Y)\right\|,
\end{equation}
whenever $\|\pi - \proj_{C_1\cap C_2}(\pi+D_X \pi D_Y)\|\leq \epsilon$, where $\mathcal{X}$ is the critical point set of \eqref{eq:gw_compact} defined by 
\begin{equation}
\label{eq:gw_critical}
    \mathcal{X} = \{\pi \in C_1\cap C_2: 0\in \nabla f(\pi)+\mathcal{N}_{C_1}(\pi)  +\mathcal{N}_{C_2}(\pi)\}
\end{equation}
and $\mathcal{N}_{C}(\pi)$ is the normal cone to $C$ at $\pi$. 
\end{proposition}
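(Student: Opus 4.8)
My approach is to recognize that, once the smooth part is differentiated, the stationarity condition for \eqref{eq:gw_qua} is an \emph{affine} variational inequality over a polytope, and then to invoke the error-bound theory for piecewise affine maps. First I would record that, since $D_X$ and $D_Y$ are symmetric, $\nabla f(\pi) = -2\,D_X \pi D_Y$, so the gradient map $\mathcal{A}\colon \pi \mapsto -2\,D_X \pi D_Y$ is linear. Because $C_1$ and $C_2$ are polyhedral and $C := C_1 \cap C_2$ (the transportation polytope) is nonempty, the polyhedral calculus identity $\mathcal{N}_{C_1}(\pi) + \mathcal{N}_{C_2}(\pi) = \mathcal{N}_{C}(\pi)$ holds with no extra constraint qualification; hence the critical set \eqref{eq:gw_critical} is exactly the solution set of the AVI $\langle \mathcal{A}(\pi),\pi'-\pi\rangle \ge 0$ for all $\pi' \in C$. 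By the projection characterization of variational inequalities, $\pi$ solves this AVI iff the natural residual $R(\pi) := \pi - \proj_{C}\!\big(\pi - \tfrac{1}{2}\mathcal{A}(\pi)\big) = \pi - \proj_{C}(\pi + D_X \pi D_Y)$ vanishes. Thus the right-hand side of \eqref{eq:erb} equals $\|R(\pi)\|$, and the claim is a local Lipschitzian error bound for $R$ relative to its zero set $\mathcal{X}$. I would note at the outset that $\mathcal{X} \neq \varnothing$ because $f$ is continuous and $C$ is compact, so the bound is non-vacuous.

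The structural fact driving the proof is that $R$ is \emph{piecewise affine}. The Euclidean projection onto the polyhedron $C$ is a globally Lipschitz piecewise affine map; composing it with the affine map $\pi \mapsto \pi + D_X \pi D_Y$ and subtracting the identity therefore leaves $R$ piecewise affine. Hence there is a finite polyhedral subdivision $\{P_i\}$ of the domain with $R(\pi) = \mathcal{A}_i \pi + b_i$ on each $P_i$. On every piece whose restricted zero set $S_i := \{\pi \in P_i : \mathcal{A}_i \pi + b_i = 0\}$ is nonempty we have $S_i \subseteq \mathcal{X}$, and Hoffman's error bound applied to the consistent linear system defining $S_i$ supplies a constant $\tau_i$ with $\operatorname{dist}(\pi,\mathcal{X}) \le \operatorname{dist}(\pi,S_i) \le \tau_i\,\|R(\pi)\|$ for all $\pi \in P_i$.

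The delicate point --- and the reason \eqref{eq:erb} is localized to $\{\|R(\pi)\|\le\epsilon\}$ rather than global --- is the treatment of pieces $P_j$ on which $R$ has no zero. I would show that on such a piece $R$ is bounded away from zero by a positive constant $\delta_j$: if the affine system $\mathcal{A}_j\pi + b_j = 0$ is inconsistent this is immediate (the residual dominates the component of $b_j$ orthogonal to the range of $\mathcal{A}_j$), while if its solution set $V_j$ is nonempty but disjoint from $P_j$ I would use that two disjoint polyhedra are positively separated (their Minkowski difference is a closed polyhedron avoiding the origin) together with $\|\mathcal{A}_j\pi + b_j\| \ge \sigma_j\,\operatorname{dist}(\pi,V_j)$, where $\sigma_j$ is the smallest positive singular value of $\mathcal{A}_j$. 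Taking $\epsilon$ strictly below the minimum of these $\delta_j$ over the finitely many zero-free pieces forces any $\pi$ with $\|R(\pi)\|\le\epsilon$ to lie only in pieces with nonempty $S_i$; setting $\tau = \max_i \tau_i$ over those pieces then yields \eqref{eq:erb}. I expect this separation-of-zero-free-pieces argument to be the main obstacle, since it is precisely what forces the localization and what genuinely exploits the polyhedral (not merely convex) geometry; by contrast, the linearity of $\nabla f$, the piecewise affineness of the projection, and Hoffman's lemma are standard, and the whole statement can be read as a specialization of the Luo--Tseng error bound for affine variational inequalities over polyhedra.
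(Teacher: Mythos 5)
Your proposal is correct, but it takes a genuinely different route from the paper. The paper treats the error bound on the feasible set as a black box: since \eqref{eq:gw_qua} is a (nonconvex) quadratic program over the polyhedron $C_1\cap C_2$, it directly invokes Theorem~2.3 of \citet{luo1992error} to get \eqref{eq:erb} for all $\pi\in C_1\cap C_2$, and the only real work in the paper's proof is the extension to arbitrary $\pi\in\mathbb{R}^{n\times m}$ --- setting $\tilde{\pi}=\proj_{C_1\cap C_2}(\pi)$ and combining the triangle inequality, the non-expansiveness of $\proj_{C_1\cap C_2}$, and the Lipschitz constant $\sigma_{\max}(D_X)\sigma_{\max}(D_Y)+1$ of the map $\pi\mapsto\pi+D_X\pi D_Y$ to absorb the infeasibility $\|\pi-\tilde{\pi}\|$ into the residual. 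You instead reprove the cited theorem from scratch: you recast criticality as an affine variational inequality (correctly using the polyhedral normal-cone sum rule $\mathcal{N}_{C_1}+\mathcal{N}_{C_2}=\mathcal{N}_{C_1\cap C_2}$, which the paper uses only implicitly), observe that the natural residual is piecewise affine, apply Hoffman's bound on each piece containing zeros of the residual, and dispose of the zero-free pieces by a positive-separation argument that explains exactly why the hypothesis $\|R(\pi)\|\le\epsilon$ is needed. Your argument is longer but self-contained, works on all of $\mathbb{R}^{n\times m}$ in one pass (so no separate extension step is required), and makes the role of the localization transparent, whereas the paper's proof is shorter, keeps the deep part hidden inside the citation, and isolates the one ingredient the cited theorem does not supply --- validity off the feasible set, which matters because BAPG's iterates are infeasible. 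The minor discrepancy between your gradient $-2D_X\pi D_Y$ and the residual's step $\pi+D_X\pi D_Y$ is handled correctly by your choice of step size $1/2$, and your remark that $\mathcal{X}\neq\varnothing$ by compactness of $C_1\cap C_2$ is a detail the paper omits.
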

\begin{proof}
$D_X$ and $D_Y$ are two symmetric matrices and $C_1\cap C_2$ is a convex polyhedral set. By invoking Theorem 2.3 in \citep{luo1992error},  the Luo-Tseng local error bound condition \eqref{eq:erb} just holds for the feasible set $C_1 \cap C_2$. That is, 
\begin{equation}
\label{eq:erb_a}
\dist2(\pi,\mathcal{X}) \leq \tau \left\|\pi - \proj_{C_1\cap C_2}(\pi+D_X \pi D_Y)\right\|,
\end{equation}
where $\pi \in C_1\cap C_2$.
Then, we aim at extending \eqref{eq:erb_a} to the whole space. Define $\tilde{\pi} = \proj_{C_1 \cap C_2} (\pi)$, we have 
\begin{equation}\label{ineq:1}
    \dist2(\pi,\mathcal{X}) \leq  \|\pi - \tilde{\pi}\| + d(\tilde{\pi}, \mathcal{X}) \leq  \|\pi - \tilde{\pi}\| + \tau \|\tilde{\pi} -\proj_{C_1\cap C_2}(\tilde{\pi}+D_X \tilde{\pi} D_Y)\|
\end{equation}
where the first inequality holds because of the triangle inequality, and the second one holds because of the fact $\tilde{\pi} \in C_1 \cap C_2$ and \eqref{eq:erb_a}. Apply the triangle inequality again, we have
\begin{equation}\label{ineq:2}
\begin{aligned}
    &\|\tilde{\pi} -\proj_{C_1\cap C_2}(\tilde{\pi}+D_X \tilde{\pi} D_Y)\| \\ \leq & \|\tilde{\pi} -\proj_{C_1\cap C_2}(\pi+D_X \pi D_Y)\| +  \|\proj_{C_1\cap C_2}(\pi+D_X \pi D_Y) - \proj_{C_1\cap C_2}(\tilde{\pi}+D_X \tilde{\pi} D_Y)\|
    \end{aligned}
\end{equation}
Since $\tilde{\pi} = \proj_{C_1 \cap C_2} (\pi)$, owing to the fact that the projection operator onto a convex set $\proj_{C_1 \cap C_2}(\cdot)$ is non-expensive, i.e., $\|\proj_{C_1 \cap C_2}(x) - \proj_{C_1 \cap C_2}(y)\| \leq \|x - y\|$ holds for any $x$ and $y$, we have 
\begin{equation}\label{ineq:3}
\begin{aligned}
     \|\tilde{\pi} -\proj_{C_1\cap C_2}(\pi+D_X \pi D_Y)\| & =  \|\proj_{C_1 \cap C_2} (\pi) -\proj_{C_1\cap C_2}( \proj_{C_1\cap C_2}(\pi+D_X \pi D_Y))\| \\
     &\leq \|\pi -\proj_{C_1\cap C_2}(\pi+D_X \pi D_Y)\|,
    \end{aligned}
\end{equation}
and 
\begin{equation}\label{ineq:4}
\begin{aligned}
    \|\proj_{C_1\cap C_2}(\pi+D_X \pi D_Y) - \proj_{C_1\cap C_2}(\tilde{\pi}+D_X \tilde{\pi} D_Y)\| & \leq \| (\pi+D_X \pi D_Y) -  (\tilde{\pi}+D_X \tilde{\pi} D_Y)\| \\ & \leq (\sigma_{\text{max}}(D_X)\sigma_{\text{max}}(D_Y)+1)\|\pi- \tilde{\pi}\|
         \end{aligned}
\end{equation}
where $\sigma_{\text{max}}(D_X)$ and $\sigma_{\text{max}}(D_Y)$ denote the maximum singular value of $D_X$ and $D_Y$, respectively. By substituting \eqref{ineq:2} and \eqref{ineq:4} into \eqref{ineq:1}, we get
\begin{equation}\label{ineq:5}
\begin{aligned}
    \dist2(\pi,\mathcal{X}) & \leq
    ( \tau\sigma_{\text{max}}(D_X)\sigma_{\text{max}}(D_Y)+\tau+1)\|\pi - \tilde{\pi}\|+ \tau\| \pi -\proj_{C_1\cap C_2}(\pi+D_X \pi D_Y) \| \\
    & \leq ( \tau\sigma_{\text{max}}(D_X)\sigma_{\text{max}}(D_Y)+2 \tau+1) \| \pi -\proj_{C_1\cap C_2}(\pi+D_X \pi D_Y) \|
    \end{aligned}
\end{equation}
where the last inequality holds because of the fact that
\begin{align*}
    \|\pi - \tilde{\pi}\| = \|\pi - \proj_{C_1\cap C_2}(\pi)\| = \min_{x \in C_1\cap C_2} \|\pi - x\| \leq \| \pi -\proj_{C_1\cap C_2}(\pi+D_X \pi D_Y) \|.
\end{align*}
By letting $\tau = \tau\sigma_{\text{max}}(D_X)\sigma_{\text{max}}(D_Y)+2\tau+1$, we get the desired result. 
\end{proof}
As the GW problem is a nonconvex quadratic program with polytope constraint, we can invoke Theorem 2.3 in \citep{luo1992error} to conclude that the  error bound condition \eqref{eq:erb} holds on the whole feasible set $C_1 \cap C_2$. Proposition \ref{prop:erb} extends \eqref{eq:erb} to the whole space $\mathbb{R}^{n\times m}$. This regularity condition is trying to bound the distance of any coupling matrix to the critical point set  of the GW problem by its optimality residual, which is characterized by the difference for one step projected gradient descent.
It turns out that this error bound condition  plays an important role in quantifying the approximation bound for the fixed points set of BAPG explicitly. 
% It is worthwhile mentioning that \Jiajin{departure from the usual way to use Luo-Tseng error bound!}
\subsection{Approximation Bound for the Fix-Point Set of BAPG}
To start, we present one key lemma that shall be used in studying the approximation bound of BAPG. 
\begin{lemma}
\label{lem:lemma_proj}
Let $C_{1}$ and $C_{2}$ be convex polyhedral sets. There exists a constant $M>0$ such that
\begin{equation}
\label{eq:lemma_proj}
\begin{aligned}
&\left\|\proj_{C_{1}}(x)+\proj_{C_{2}}(y)- 2 \proj_{C_1\cap C_2}\left(\frac{x+y}{2}\right)\right\| 
\leqslant  M\left\|\proj_{C_{1}}(x)-\proj_{C_{2}}(y)\right\|, \forall x\in C_1, y\in C_2.
\end{aligned}
\end{equation}
\end{lemma}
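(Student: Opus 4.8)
The plan is to first strip away the outer projections and recognize the left-hand side as (twice) a distance to the intersection, and then to close the estimate using the polyhedral structure through a Hoffman-type error bound.

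First, I would exploit the hypotheses $x\in C_1$ and $y\in C_2$, which make the outer projections trivial: $\proj_{C_1}(x)=x$ and $\proj_{C_2}(y)=y$. Writing $z:=\tfrac{x+y}{2}$ and $p:=\proj_{C_1\cap C_2}(z)$, the left-hand side of \eqref{eq:lemma_proj} collapses to
\[
\norm{x+y-2p}=2\,\norm{z-\proj_{C_1\cap C_2}(z)},
\]
which is exactly twice the distance from the midpoint $z$ to $C_1\cap C_2$, whereas the right-hand side is $M\norm{x-y}$. Hence the lemma is equivalent to the single estimate $\norm{z-\proj_{C_1\cap C_2}(z)}\le \tfrac{M}{2}\,\norm{x-y}$ for all $x\in C_1,\ y\in C_2$. (Implicit here, as throughout the GW setting, is that $C_1\cap C_2\neq\emptyset$, so that the projection is well defined.)

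Next, I would bound the distance of $z$ to each set separately. Since $x\in C_1$ is an admissible competitor for the projection of $z$ onto $C_1$, and $z-x=\tfrac12(y-x)$, we get $\norm{z-\proj_{C_1}(z)}\le\norm{z-x}=\tfrac12\norm{x-y}$, and symmetrically $\norm{z-\proj_{C_2}(z)}\le\tfrac12\norm{x-y}$. Thus the midpoint is simultaneously close to both $C_1$ and $C_2$, with each individual residual controlled by $\tfrac12\norm{x-y}$. The decisive step is then to upgrade these two separate closeness estimates to closeness to the intersection, and this is precisely where polyhedrality enters. For two polyhedral sets with nonempty intersection, the pair $\{C_1,C_2\}$ is linearly regular: there is a constant $\kappa>0$, depending only on $C_1$ and $C_2$, with
\[
\norm{w-\proj_{C_1\cap C_2}(w)}\le \kappa\,\max\left\{\norm{w-\proj_{C_1}(w)},\ \norm{w-\proj_{C_2}(w)}\right\}
\]
for every $w$. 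Applying this at $w=z$ together with the previous bounds yields $\norm{z-\proj_{C_1\cap C_2}(z)}\le\tfrac{\kappa}{2}\norm{x-y}$, so $M:=\kappa$ works. For a fully self-contained constant I would instead write $C_1=\{w:A_1w\le b_1\}$ and $C_2=\{w:A_2w\le b_2\}$ and invoke Hoffman's error bound directly, which dominates $\norm{z-\proj_{C_1\cap C_2}(z)}$ by a constant times the aggregated residual $\big((A_1z-b_1)_+,(A_2z-b_2)_+\big)$; feasibility of $x$ for $C_1$ and of $y$ for $C_2$ then bounds each block by $\tfrac12\norm{A_i}\,\norm{x-y}$, producing an explicit $M$ in terms of the Hoffman modulus and $\norm{A_1},\norm{A_2}$.

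The genuinely nontrivial ingredient is the linear-regularity/Hoffman step. The elementary convexity bounds only certify that $z$ is near each set separately, and for general convex sets being near two sets does \emph{not} force being comparably near their intersection (the ratio can blow up when the sets meet tangentially). What rescues the argument is that $C_1$ and $C_2$ are polyhedra, for which the classical Hoffman bound guarantees a finite regularity modulus; this is the one place where polyhedrality is essential, and it is exactly the property the lemma quietly relies on. Everything else is the reduction of the first step plus the two one-line distance estimates.
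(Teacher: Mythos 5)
Your proof is correct for the lemma as stated, but it takes a genuinely different route from the paper's. The paper does not use the hypothesis $x\in C_1$, $y\in C_2$ at all: it rewrites the left-hand side as $\sqrt{2}$ times the distance between the optimal solutions $(p(0),q(0))$ and $(p(r),q(r))$ of the parametric quadratic program $\min\{\tfrac12\|x-p\|^2+\tfrac12\|y-q\|^2 : p-q=r,\ p\in C_1,\ q\in C_2\}$ at $r=0$ and $r=\proj_{C_1}(x)-\proj_{C_2}(y)$, and then invokes a Lipschitz solution-stability theorem for linear-quadratic programs over polyhedra (Theorem 4.1 of Zhang et al., 2020). Your argument instead uses $x\in C_1$, $y\in C_2$ to collapse the outer projections, reduces the claim to $\operatorname{dist}\bigl(\tfrac{x+y}{2},C_1\cap C_2\bigr)\le\tfrac{M}{2}\|x-y\|$, and closes it with global linear regularity of a pair of polyhedra (Hoffman's bound) — a tool the paper itself already uses elsewhere (in the proof of Proposition~\ref{prop:bapg_approx}), so your route is arguably more elementary and self-contained. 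The one caveat worth flagging: because your reduction hinges on $\proj_{C_1}(x)=x$ and $\proj_{C_2}(y)=y$, it does not extend to arbitrary $x,y$, whereas the paper's QP-stability argument does; and in the proof of Proposition~\ref{prop:bapg_approx} the lemma is actually applied with $x=\pi^\star+p$ and $y=w^\star+q$ for normal-cone vectors $p,q$, which lie outside $C_1\times C_2$. So your proof establishes the statement as literally quantified, but the unrestricted version that the paper needs downstream would require either the paper's stability argument or an extra step relating $\proj_{C_1\cap C_2}\bigl(\tfrac{x+y}{2}\bigr)$ to $\proj_{C_1\cap C_2}\bigl(\tfrac{\proj_{C_1}(x)+\proj_{C_2}(y)}{2}\bigr)$.
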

% The proof idea follows essentially from the observation that the inequality can be regarded as the stability of the optimal solution for a linear-quadratic problem, i.e., \begin{equation*}
% (p(r),q(r)) = 
% \begin{aligned}
%  & \mathop{\arg\min}_{p,q} \frac{1}{2}\|x-p\|^2 +\frac{1}{2}\|y-q\|^2\\
%  & \,\, \quad \text{s.t.} \, \, \quad  p-q= r, \, p \in C_1, q\in C_2. 
% \end{aligned}
% \end{equation*}
% The parameter $r$ is indeed the perturbation quantity.  If  $r=0$, we have $p(0)= q(0) = \proj_{C_1 \cap C_2} (\frac{x+y}{2})$; by choosing $r = \proj_{C_1}(x) - \proj_{C_2}(y)$, it is easy to see that $(p(r), q(r)) = (\proj_{C_1}(x), \proj_{C_2}(y))$.  Together with 
% Theorem 4.1 in \citep{zhang2020global}, the desired result is obtained. All the proof details are given in Appendix.

\begin{proof}
we first convert the left-hand side of \eqref{eq:lemma_proj} to
 \begin{align*}
    \ & \left\|\proj_{C_{1}}(x)-\proj_{C_1\cap C_2}\left(\frac{x+y}{2}\right) +\proj_{C_{2}}(y)-  \proj_{C_1\cap C_2}\left(\frac{x+y}{2}\right)\right\| \\
    \leq \ & \sqrt{2}\left\| \left(\proj_{C_1\cap C_2}\left(\frac{x+y}{2}\right),\proj_{C_1\cap C_2}\left(\frac{x+y}{2}\right)\right)-(\proj_{C_{1}}(x),\proj_{C_{2}}(y)) \right\|. 
\end{align*}
The core proof idea follows essentially from the observation that the inequality can be regarded as the stability of the optimal solution for a linear-quadratic problem, i.e., 
\begin{equation}
\label{eq:proj_cons}
(p(r),q(r)) = 
\begin{aligned}
 & \mathop{\arg\min}_{p,q} \frac{1}{2}\|x-p\|^2 +\frac{1}{2}\|y-q\|^2\\
 & \,\, \quad \text{s.t.} \, \, \quad  p-q= r, \\
 & \quad \quad \quad \quad  p \in C_1, q\in C_2. 
\end{aligned}
\end{equation}
When $r = 0$, the pair $(p(0),q(0))$ satisfies $p(0)=q(0)=\proj_{C_1\cap C_2}\left(\frac{x+y}{2}\right)$. Moreover, the parameter $r$ itself can be viewed as the perturbation quantity, which is indeed the right-hand side of \eqref{eq:lemma_proj}. By invoking Theorem 4.1 in \citep{zhang2020global}, we can bound the distance between two optimal solutions by the perturbation quantity $r$, i.e., 
\begin{align*}
     \|(p(0),q(0))-(p(r),q(r))\| \leq M \|r\|.
\end{align*}

\begin{remark}
For any vector $r$, let $(p(r), q(r))$ be the optimal solution to \eqref{eq:proj_cons}. On the one hand, choosing $r=0$, it is easy to see that $p(0)= q(0) = \proj_{C_1 \cap C_2} (\frac{x+y}{2})$. On the other hand, by choosing $r = \proj_{C_1}(x) - \proj_{C_2}(y)$, it is easy to see that $(p(r), q(r)) = (\proj_{C_1}(x), \proj_{C_2}(y))$. Since Theorem 4.1 in \citep{zhang2020global} implies that for any $r$, we have
	    \begin{align*}
	        \|(p(0), q(0)) - (p(r), q(r))\|_2 \leq M \|r\|_2
	    \end{align*}
	    where $M$ only depends on $C_1$ and $C_2$. By choosing $r = \proj_{C_1}(x) - \proj_{C_2}(y)$, we have
	    \begin{align*}
	        \|(\proj_{C_1 \cap C_2} (\frac{x+y}{2}), \proj_{C_1 \cap C_2} (\frac{x+y}{2})) - (\proj_{C_1}(x), \proj_{C_2}(y)) \leq M \|\proj_{C_1}(x) - \proj_{C_2}(y)\|_2.
	    \end{align*}
	\end{remark}
\end{proof}
Equipped with Lemma \ref{lem:lemma_proj} and Proposition \ref{prop:erb}, it is not hard to obtain the following approximation result. 
\begin{proposition} [Approximation Bound of the Fix-point Set of BAPG]
\label{prop:bapg_approx}

The point \\$(\pi^\star,w^\star)$ belongs to the fixed-point set $\mathcal{X}_{\text{BAPG}}$ of BAPG if it satisfies 
% Let $\mathcal{X}_{\text{BAPG}}$ be a set of all the limit points of the sequence $\{(\pi^k,w^k)\}_{k\ge0}$ generated by BAPG that satisfy 
% The point $(\pi^\star,w^\star)$ is a limit point of the sequence $\{(\pi^k,w^k)\}_{k\ge0}$ generated by the BAPG iterate \eqref{eq:bapg_general} (i.e.,$(\pi^\star,w^\star) \in \mathcal{X}_{\text{BAPG}} $ ) if it satisfies
% The point $(\pi^\star,w^\star)$ belongs to the critical point set $\mathcal{X}_{\text{BAPG}}$ if it satisfies 
\begin{equation}
\label{eq:x_bapg}
\begin{aligned}
& \nabla f(w^\star) + \rho (\nabla h(\pi^\star)-\nabla h(w^\star)) + p =0,  \\ 
& \nabla f(\pi^\star) + \rho (\nabla h(w^\star)-\nabla h(\pi^\star)) + q =0, 
\end{aligned}
\end{equation} 
where $p \in \mathcal{N}_{C_1}(\pi^\star)$ and $q \in \mathcal{N}_{C_2}(w^\star)$. Then, 
the infeasibility error satisfies $\|\pi^\star-w^\star\| \leq  \frac{\tau_1}{\rho}$ and the gap between $\mathcal{X}_{\text{BAPG}}$ and $\mathcal{X}$ satisfies
\[
\dist2\left(\frac{\pi^\star+w^\star}{2},\mathcal{X}\right) \leq \frac{\tau_2}{\rho},
\]
where $\tau_1$ and $\tau_2$ are two constants. 
\end{proposition}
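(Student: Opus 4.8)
The plan is to read the two stationarity conditions in \eqref{eq:x_bapg} as perturbed optimality systems for projections onto $C_1$ and $C_2$, and then to attack the two claimed estimates separately: the infeasibility bound $\|\pi^\star-w^\star\|\le\tau_1/\rho$ through strong convexity of the kernel $h$ together with the polyhedral geometry of $C_1,C_2$, and the distance bound through Lemma \ref{lem:lemma_proj} fed into the error bound of Proposition \ref{prop:erb}. Throughout I will use that $C_1$ and $C_2$ are bounded transportation polytopes, so the iterates and the matrices $D_X\pi^\star D_Y$, $D_X w^\star D_Y$ have norms controlled by an absolute constant, and that $h$ is $\sigma$-strongly convex on this bounded region. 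I will also repeatedly use the single identity obtained by adding the two lines of \eqref{eq:x_bapg}: the $\rho$-terms cancel and $p+q=-\nabla f(w^\star)-\nabla f(\pi^\star)=D_X(\pi^\star+w^\star)D_Y$, i.e. $p+q=-\nabla f(\bar\pi)$ with $\bar\pi:=\tfrac{\pi^\star+w^\star}{2}$, where $p\in\mathcal N_{C_1}(\pi^\star)$ and $q\in\mathcal N_{C_2}(w^\star)$. This is the stationarity certificate that drives the second half of the argument.

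For the infeasibility bound I would test the two conditions in \eqref{eq:x_bapg} against a common feasible reference $z\in C_1\cap C_2$, using the normal-cone inequalities $\langle p,z-\pi^\star\rangle\le0$ and $\langle q,z-w^\star\rangle\le0$. Adding the two tested identities, the Bregman terms combine into $\rho\langle\nabla h(\pi^\star)-\nabla h(w^\star),\pi^\star-w^\star\rangle$, and after isolating this term and discarding the nonpositive normal-cone contributions one is left with $\rho\sigma\|\pi^\star-w^\star\|^2\le\langle\nabla f(w^\star),z-\pi^\star\rangle+\langle\nabla f(\pi^\star),z-w^\star\rangle$, whose right-hand side is bounded by $\sigma_{\max}(D_X)\sigma_{\max}(D_Y)$ times $\|z-\pi^\star\|+\|z-w^\star\|$. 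The key step is to choose $z$ well: by bounded linear regularity of the two polyhedra there is a constant $\kappa$ with $\mathrm{dist}(\pi^\star,C_1\cap C_2)\le\kappa\,\mathrm{dist}(\pi^\star,C_2)\le\kappa\|\pi^\star-w^\star\|$, so I may take $z\in C_1\cap C_2$ with $\|z-\pi^\star\|\le\kappa\|\pi^\star-w^\star\|$ and hence $\|z-w^\star\|\le(1+\kappa)\|\pi^\star-w^\star\|$. The right-hand side then becomes $O(\|\pi^\star-w^\star\|)$, and dividing through by $\|\pi^\star-w^\star\|$ yields $\|\pi^\star-w^\star\|\le\tau_1/\rho$.

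For the distance bound I would first turn the stationarity conditions into projection identities; in the Euclidean case $h=\tfrac12\|\cdot\|^2$ they read $\pi^\star=\proj_{C_1}\!\big(w^\star+\tfrac1\rho D_Xw^\star D_Y\big)$ and $w^\star=\proj_{C_2}\!\big(\pi^\star+\tfrac1\rho D_X\pi^\star D_Y\big)$. Applying Lemma \ref{lem:lemma_proj} with $x=w^\star+\tfrac1\rho D_Xw^\star D_Y$ and $y=\pi^\star+\tfrac1\rho D_X\pi^\star D_Y$, the separate projections are exactly $\pi^\star$ and $w^\star$, while the averaged argument is $\tfrac{x+y}{2}=\bar\pi+\tfrac1\rho D_X\bar\pi D_Y$; the lemma then gives $\|\bar\pi-\proj_{C_1\cap C_2}(\bar\pi+\tfrac1\rho D_X\bar\pi D_Y)\|\le\tfrac{M}{2}\|\pi^\star-w^\star\|\le\tfrac{M\tau_1}{2\rho}$, and taking instead $x=\pi^\star,\,y=w^\star$ gives $\mathrm{dist}(\bar\pi,C_1\cap C_2)\le\tfrac{M}{2}\|\pi^\star-w^\star\|$, so $\bar\pi$ is $O(1/\rho)$-feasible. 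I would then invoke Proposition \ref{prop:erb}: once $\rho$ is large enough that the optimality residual is below $\epsilon$, the error bound converts that residual into $\mathrm{dist}(\bar\pi,\mathcal X)$, and combined with the certificate $p+q=-\nabla f(\bar\pi)$ — which says the force $D_X\bar\pi D_Y$ is, up to the $O(1/\rho)$ discrepancy between $\pi^\star,w^\star$ and $\bar\pi$, normal to $C_1\cap C_2$ — this yields $\mathrm{dist}(\bar\pi,\mathcal X)\le\tau_2/\rho$.

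The step I expect to be the main obstacle is the reconciliation of scales in the second half. The residual produced naturally by the BAPG fixed point, and by the lemma, carries the method's step size $1/\rho$, i.e. it controls $\|\bar\pi-\proj_{C_1\cap C_2}(\bar\pi+\tfrac1\rho D_X\bar\pi D_Y)\|$, whereas the error bound of Proposition \ref{prop:erb} is stated for the unit-step residual $\|\bar\pi-\proj_{C_1\cap C_2}(\bar\pi+D_X\bar\pi D_Y)\|$; a black-box comparison of these two residuals is lossy, and a variational-inequality/gap estimate only delivers $O(1/\sqrt\rho)$. To obtain the sharp linear rate I would avoid the gap estimate and bound the unit-step residual directly, using the certificate $p+q=-\nabla f(\bar\pi)$ together with $\|\pi^\star-\bar\pi\|=\|w^\star-\bar\pi\|=\tfrac12\|\pi^\star-w^\star\|=O(1/\rho)$ and the non-expansiveness of $\proj_{C_1\cap C_2}$, so that the normal-cone contributions of $p$ and $q$ at the nearby points $\pi^\star,w^\star$ are transferred to $\bar\pi$ with only an $O(1/\rho)$ error. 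Verifying that this transfer is genuinely linear in $\|\pi^\star-w^\star\|$ rather than square-root, and checking the neighborhood hypothesis of Proposition \ref{prop:erb} for all large $\rho$, are the two technical crux points.
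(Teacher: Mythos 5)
Your first half (the infeasibility bound) is essentially the paper's own argument: the paper also tests the two stationarity conditions against a feasible reference point, namely $\hat\pi=\proj_{C_1\cap C_2}(\pi^\star)$, uses bounded linear regularity to get $\|\hat\pi-\pi^\star\|+\|\hat\pi-w^\star\|\le(2\kappa+1)\|\pi^\star-w^\star\|$, discards the nonpositive normal-cone terms, and converts $\langle\nabla h(\pi^\star)-\nabla h(w^\star),\pi^\star-w^\star\rangle=D_h(\pi^\star,w^\star)+D_h(w^\star,\pi^\star)\ge\sigma\|\pi^\star-w^\star\|^2$ via strong convexity of $h$. That part is sound and matches the paper.

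The second half contains a genuine gap, which you have correctly located but not closed. Your application of Lemma \ref{lem:lemma_proj} with $x=w^\star+\tfrac1\rho D_Xw^\star D_Y$ and $y=\pi^\star+\tfrac1\rho D_X\pi^\star D_Y$ controls the residual at step size $1/\rho$, not the unit-step residual required by Proposition \ref{prop:erb}, and as you observe a black-box comparison of the two residuals is lossy. The resolution is not a delicate ``transfer of normal-cone contributions with $O(1/\rho)$ error''; it is a different, scale-free choice of arguments in the lemma. Since $p\in\mathcal{N}_{C_1}(\pi^\star)$ and $q\in\mathcal{N}_{C_2}(w^\star)$, the projection identity $\proj_{C}(u+z)=u$ for $u\in C$, $z\in\mathcal{N}_{C}(u)$ gives $\proj_{C_1}(\pi^\star+p)=\pi^\star$ and $\proj_{C_2}(w^\star+q)=w^\star$, while your own certificate $-\nabla f(\bar\pi)=\tfrac{p+q}{2}$ rewrites the unit-step residual as
\begin{equation*}
\Bigl\|\bar\pi-\proj_{C_1\cap C_2}\Bigl(\tfrac{(\pi^\star+p)+(w^\star+q)}{2}\Bigr)\Bigr\|
=\tfrac12\Bigl\|\proj_{C_1}(x)+\proj_{C_2}(y)-2\proj_{C_1\cap C_2}\Bigl(\tfrac{x+y}{2}\Bigr)\Bigr\|
\end{equation*}
with $x=\pi^\star+p$ and $y=w^\star+q$, i.e., with the \emph{full, unscaled} normal vectors. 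Lemma \ref{lem:lemma_proj} then bounds this directly by $\tfrac{M}{2}\|\proj_{C_1}(x)-\proj_{C_2}(y)\|=\tfrac{M}{2}\|\pi^\star-w^\star\|\le\tfrac{M\tau_1}{2\rho}$, so there is no $1/\rho$-versus-unit-step reconciliation to perform at all: the answer to your crux question is that the bound is genuinely linear in $\|\pi^\star-w^\star\|$ because the lemma is invoked at $\pi^\star+p$, $w^\star+q$ rather than at the $1/\rho$-perturbed points. Feeding this into Proposition \ref{prop:erb} (applicable once $\rho$ is large enough that the residual falls below $\epsilon$, as you note) completes the argument exactly as in the paper.
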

\begin{proof}
Define $\hat{\pi} = \proj_{C_1\cap C_2}(\pi^\star)$, we first want to argue the following inequality holds, 
\begin{align*}
    \|\hat{\pi}-\pi^\star\| + \|\hat{\pi}-w^\star\| \leq (2\kappa+1)\|\pi^\star-w^\star\|. 
\end{align*}
As the bounded linear regularity condition is satisfied for the polyhedral constraint, we have 
\begin{equation}
\label{eq:blr_inf}
\begin{aligned}
    \|\hat{\pi}-\pi^\star\| +  \|\hat{\pi}-w^\star\| & \leq 2\|\hat{\pi}-\pi^\star\| + \|\pi^\star-w^\star\|\\
    & = 2\dist2(\pi^\star,C_1\cap C_2)+\|\pi^\star-w^\star\| \\
    & \leq (2\kappa +1)\|\pi^\star-w^\star\|. 
\end{aligned}
\end{equation}
Based on the stationary points defined in \eqref{eq:x_bapg}, we have, 
\[
\nabla f(w^\star)^T(\hat{\pi}-\pi^\star) + \rho (\nabla h(\pi^\star)-\nabla h(w^\star))^T(\hat{\pi}-\pi^\star)+ p^T(\hat{\pi}-\pi^\star) =0, p \in \mathcal{N}_{C_1}(\pi^\star),
\]
\[
\nabla f(\pi^\star)^T(\hat{\pi}-w^\star) + \rho (\nabla h(w^\star)-\nabla h(\pi^\star))^T(\hat{\pi}-w^\star)  + q^T(\hat{\pi}-w^\star)  =0, q \in \mathcal{N}_{C_2}(w^\star).
\]
Summing up the above two equations, 
\begin{align*}
& \nabla f(w^\star)^T(\hat{\pi}-\pi^\star) + \nabla f(\pi^\star)^T(\hat{\pi}-w^\star) + \rho (\nabla h(\pi^\star)-\nabla h(w^\star))^T(w^\star-\pi^\star)+p^T(\hat{\pi}-\pi^\star)+q^T(\hat{\pi}-w^\star) =0 \\
\stackrel{(\clubsuit)}{\Rightarrow} & 
\nabla f(w^\star)^T(\hat{\pi}-\pi^\star) + \nabla f(\pi^\star)^T(\hat{\pi}-w^\star) - \rho(D_h(\pi^\star,w^\star)+D_h(w^\star,\pi^\star))+p^T(\hat{\pi}-\pi^\star)+q^T(\hat{\pi}-w^\star) =0 \\
\stackrel{(\spadesuit)}{\Rightarrow} & \nabla f(w^\star)^T(\hat{\pi}-\pi^\star) + \nabla f(\pi^\star)^T(\hat{\pi}-w^\star) - \rho(D_h(\pi^\star,w^\star)+D_h(w^\star,\pi^\star)) \ge 0,  
\end{align*}
where $(\clubsuit)$ holds as $(\nabla h(x)-\nabla h(y))^T(x-y) = D_h(x,y)+D_h(y,x)$ and $(\spadesuit)$ follows from the property of the normal cone. For instance, since $q \in \mathcal{N}_{C_1}(\pi^\star)$, we have $q^T(\hat{\pi}-\pi^\star)\leq 0$ (i.e., $\hat{\pi} \in C_1$). Therefore, 
\begin{align*}
    \rho(D_h(\pi^\star,w^\star)+D_h(w^\star,\pi^\star)) & \leq  \nabla f(w^\star)^T(\hat{\pi}-\pi^\star) + \nabla f(\pi^\star)^T(\hat{\pi}-w^\star)\\
    & \leq \|\nabla f(w^\star)\|\|\hat{\pi}-\pi^\star\|+\|\nabla f(\pi^\star)\|\|\hat{\pi}-w^\star\| \\
    & \leq L_f(\|\hat{\pi}-\pi^\star\|+\|\|\hat{\pi}-w^\star\|)\\
    & \stackrel{\eqref{eq:blr_inf}}{\leq} (2\kappa+1)L_f\|\pi^\star-w^\star\|.
\end{align*}
 The next to last inequality holds as $f(\cdot)$ is a quadratic function and the effective domain $C_1 \cap C_2$ is bounded. Thus, the norm of its gradient will naturally have a constant upper bound, i.e., $L_f = \sigma_{\text{max}}(D_X)\sigma_{\text{max}}(D_Y)$. As $h$ is a $\sigma$-strongly convex, we have 
\[D_h(\pi^\star,w^\star) \ge \frac{\sigma}{2}\|\pi^\star-w^\star\|^2.\]
Together with this property, we can quantify the infeasibility error $\|\pi^\star-w^\star\|$,
\begin{equation}
    \label{eq:infeas_BAPG}
     \|\pi^\star-w^\star\| \leq \frac{(2\kappa+1)L_f}{\delta\rho}.
\end{equation}
When $\rho \rightarrow +\infty$, it is easy to observe that the infeasibility error term $\|\pi^\star-w^\star\|$ will shrink to zero. More importantly, if $\pi^\star=w^\star$, then $X_{\text{BAPG}}$ will be identical to $\mathcal{X}$. 
Next, we target at quantifying the approximation gap between the fixed-point set of BAPG and the critical point set of the original problem \eqref{eq:gw_qua}. Upon \eqref{eq:x_bapg}, we have 
\[
\nabla f \left(\frac{\pi^\star+w^\star}{2}\right)+\frac{p+q}{2} = 0,  
\]
where $\nabla f(\cdot)$ is a linear operator. By applying the Luo-Tseng local error bound condition of \eqref{eq:gw_qua}, i.e., Proposition \ref{prop:erb}, we have
\begin{equation*}
\begin{aligned}
     \dist2\left(\frac{\pi^\star+w^\star}{2}, \mathcal{X}\right)  & \leq \tau \left\|\frac{\pi^\star+w^\star}{2} - \proj_{C_1\cap C_2}\left(\frac{\pi^\star+w^\star}{2}-\nabla f \left(\frac{\pi^\star+w^\star}{2}\right)\right)\right\| \\
     & =  \tau \left\|\frac{\pi^\star+w^\star}{2} - \proj_{C_1\cap C_2}\left(\frac{\pi^\star+p+w^\star+q}{2}\right)\right\| \\
     & \stackrel{(\clubsuit)}{=} \frac{\tau}{2}\left\| \proj_{C_1}(\pi^\star+p) + \proj_{C_1}(w^\star+q) - 2\proj_{C_1\cap C_2}\left(\frac{\pi^\star+p+w^\star+q}{2}\right) \right\| \\
     & \stackrel{(\spadesuit)}{\leq} \frac{M\tau}{2}\|\pi^\star- w^\star\|,
\end{aligned}
\end{equation*}
where $(\clubsuit)$ holds due to the normal cone property, that is, $\proj_{C}(x+z) = x$ for any $x \in C$ and $z \in \mathcal{N}_C(x)$, and $(\spadesuit)$ follows from Lemma \ref{lem:lemma_proj}. Incorporating with \eqref{eq:infeas_BAPG}, the approximation bound for BAPG has been characterized quantitatively, i.e., 
\[
\dist2\left(\frac{\pi^\star+w^\star}{2}, \mathcal{X}\right)\leq  \frac{(2\kappa+1)L_fM}{2\delta\rho}. 
\]
\end{proof}
\begin{remark}
If $\pi^\star = w^\star$, then $\mathcal{X}_{\text{BAPG}}$ is identical to $\mathcal{X}$ and BAPG can reach a critical point of the GW problem \eqref{eq:gw_qua}. Proposition \ref{prop:bapg_approx} indicates that as $\rho \rightarrow +\infty$, the infeasibility error term $\|\pi^\star-w^\star\|$ shrinks to zero and thus BAPG converges to a critical point of \eqref{eq:gw_qua} in an asymptotic way. Furthermore, it explicitly quantifies the approximation gap when we select the parameter $\rho$ as a constant. The explicit form of $\tau_1$ and $\tau_2$ only depend on the problem itself, including $\sigma_{\textnormal{max}}(D_X)\sigma_{\textnormal{max}}(D_Y)$, the constant for the Luo-Tseng error bound condition in Proposition \ref{prop:erb} and so on.
\end{remark}
\subsection{Convergence Analysis of BAPG}
A further natural question is whether BAPG will converge or not. We answer the question in the affirmative. Specifically, we show that under several canonical assumptions, any limit point of BAPG belongs to $\mathcal{X}_{\text{BAPG}}$. 
Towards that end, let us first establish the sufficient decrease property of BAPG based on the potential function $F_\rho(\cdot)$,
\begin{proposition}
\label{prop:basic_bapg}
Let $\{(\pi^k,w^k)\}_{k\ge 0}$ be the sequence generated by BAPG. Suppose that $D_h(\cdot,\cdot)$ is symmetric on the whole sequence. Then, we have
    \begin{equation}
    \label{eq:bapg_suff_decre}
    \begin{aligned}
    & F_\rho(\pi^{k+1},w^{k+1}) - F_\rho(\pi^k ,w^k) \leq -  \rho D_h(\pi^k,\pi^{k+1}) - \rho D_h(w^k,w^{k+1}). 
    \end{aligned}
    \end{equation}
\end{proposition}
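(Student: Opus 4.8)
The plan is to prove the joint descent by splitting the $k$-th iteration into its two half-steps --- the $\pi$-update and the $w$-update in \eqref{eq:bapg_general} --- establishing a one-step decrease bound for each, and then adding them so that the bound telescopes through the intermediate point $(\pi^{k+1},w^k)$. The central tool is the three-point property for Bregman proximal minimization, which is applicable here precisely because $f(\pi,w)=-\text{Tr}(D_X\pi D_Yw^T)$ is \emph{bilinear}: when one argument is frozen, the subproblem objective is linear in the free variable plus a Bregman penalty, so no appeal to a quadratic descent lemma is needed.

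First, for the $\pi$-update I would observe that, since $\pi^k,\pi^{k+1}\in C_1$ and the second argument is frozen at $w^k$, the indicator terms cancel and the change in the potential reduces to $F_\rho(\pi^{k+1},w^k)-F_\rho(\pi^k,w^k)=[f(\pi^{k+1},w^k)+\rho D_h(\pi^{k+1},w^k)]-[f(\pi^k,w^k)+\rho D_h(\pi^k,w^k)]$. Writing the optimality condition of the $\pi$-subproblem as $\nabla_\pi f(\cdot,w^k)+\rho(\nabla h(\pi^{k+1})-\nabla h(w^k))+p=0$ with $p\in\mathcal{N}_{C_1}(\pi^{k+1})$, pairing against $\pi^k-\pi^{k+1}$, using $\langle p,\pi^k-\pi^{k+1}\rangle\le 0$ (as $\pi^k\in C_1$), the linearity of $f(\cdot,w^k)$, and the three-point identity $\langle\nabla h(\pi^{k+1})-\nabla h(w^k),\pi^k-\pi^{k+1}\rangle=D_h(\pi^k,w^k)-D_h(\pi^k,\pi^{k+1})-D_h(\pi^{k+1},w^k)$, one rearranges to obtain $F_\rho(\pi^{k+1},w^k)-F_\rho(\pi^k,w^k)\le-\rho D_h(\pi^k,\pi^{k+1})$.

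Next I would run the identical argument for the $w$-update to get $F_\rho(\pi^{k+1},w^{k+1})-F_\rho(\pi^{k+1},w^k)\le-\rho D_h(w^k,w^{k+1})$, and add the two inequalities to reach \eqref{eq:bapg_suff_decre}. The one point that requires care --- and the reason the hypothesis that $D_h$ is symmetric along the sequence is imposed --- is that the $w$-subproblem in \eqref{eq:bapg_general} penalizes with $D_h(w,\pi^{k+1})$, whereas the potential $F_\rho$ carries the term $\rho D_h(\pi^{k+1},w)$ with the arguments in the opposite order. I would invoke symmetry to identify $D_h(\pi^{k+1},w^k)=D_h(w^k,\pi^{k+1})$ and $D_h(\pi^{k+1},w^{k+1})=D_h(w^{k+1},\pi^{k+1})$, so that the three-point bound derived for the $w$-subproblem (which naturally lives in terms of $D_h(\cdot,\pi^{k+1})$) transfers verbatim into a decrease of $F_\rho$. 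This alignment is the main --- and essentially the only --- obstacle; once it is handled, the two half-steps are fully symmetric and the remaining manipulations are routine.
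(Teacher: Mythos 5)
Your proposal is correct and follows essentially the same route as the paper's proof: both rest on the subproblem optimality conditions, the bilinearity of $f$, the three-point identity for $D_h$, and the symmetry assumption to reconcile $D_h(\pi^{k+1},w)$ with $D_h(w,\pi^{k+1})$. The only difference is organizational --- you telescope through the intermediate point $(\pi^{k+1},w^k)$ to get a per-half-step decrease of $F_\rho$, whereas the paper adds the two subproblem inequalities first and simplifies the combined right-hand side afterwards --- which is arguably a slightly cleaner presentation of the same argument.
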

\begin{proof}
We first observe from the optimality conditions of main updates, i.e., 
\begin{align}
\label{eq:bapg_update_opt1}
& 0 \in \nabla_\pi f(\pi^{k},w^{k}) + \rho (\nabla h(\pi^{k+1})-\nabla h(w^{k})) + \partial g_1(\pi^{k+1}) \\
\label{eq:bapg_update_opt2}
&  0 \in \nabla_w f(\pi^{k+1},w^{k})+ \rho (\nabla h(w^{k+1})-\nabla h(\pi^{k+1})) +\partial g_2(w^{k+1})
\end{align}
where $g_1(\pi) = I_{\{\pi \in C_1\}}$ and $g_2(w) = I_{ \{w\in C_2\}}$. 
Due to the convexity of $g_1(\cdot)$, it is natural to obtain, 
\begin{align*}
g_1(\pi^{k})-g_1(\pi^{k+1}) & \ge -\langle \nabla_\pi f(\pi^k, w^{k})
+\rho (\nabla h(\pi^{k+1})-\nabla h(w^{k})), \pi^k-\pi^{k+1}\rangle \\
& = -\langle \nabla_\pi f(\pi^k, w^{k}),\pi^k-\pi^{k+1}\rangle +\langle \rho (\nabla h(\pi^{k+1})-\nabla h(w^{k})), \pi^{k+1}-\pi^{k}\rangle
\end{align*}
As $f(\pi,w) = -\text{tr}(D_X\pi D_Y w^T)$ is a bilinear function, we have,
\[
f(\pi^{k},w^k) - f(\pi^{k+1}, w^k)-\langle \nabla_\pi f(\pi^k, w^{k}),\pi^k-\pi^{k+1}\rangle = 0. 
\]
Consequently, we get 
\begin{equation}
\label{eq:BAPG_up1}
    f(\pi^{k},w^k)+g_1(\pi^{k})-f(\pi^{k+1}, w^k)-g_1(\pi^{k+1}) \ge  \rho \langle \nabla h(\pi^{k+1})-\nabla h(w^{k}), \pi^{k+1}-\pi^{k}\rangle. 
\end{equation}
Similarly, based on the $w$-update, we obtain 
\begin{equation}
\label{eq:BAPG_up2}
    f(\pi^{k+1},w^{k})+g_2(w^{k})-f(\pi^{k+1}, w^{k+1})-g_2(w^{k+1}) \ge  \rho \langle \nabla h(w^{k+1})-\nabla h(\pi^{k+1}), w^{k+1}-w^{k}\rangle. 
\end{equation}
Combine with \eqref{eq:BAPG_up1} and \eqref{eq:BAPG_up2}, we obtain
\begin{equation}
\label{eq:bapg_decrease}
\begin{aligned}
    & f(\pi^{k},w^k)+g_1(\pi^{k})+g_2(w^{k})-f(\pi^{k+1}, w^{k+1})-g_1(\pi^{k+1})-g_2(w^{k+1}) \\
    \geq \, \, &  \rho \langle \nabla h(\pi^{k+1})-\nabla h(w^{k}), \pi^{k+1}-\pi^{k}\rangle + \rho \langle \nabla h(w^{k+1})-\nabla h(\pi^{k+1}), w^{k+1}-w^{k}\rangle. 
\end{aligned}
\end{equation}
The right-hand side can be further simplified, 
\begin{align*}
     & \rho \langle \nabla h(\pi^{k+1})-\nabla h(w^{k}), \pi^{k+1}-\pi^{k}\rangle + \rho \langle \nabla h(w^{k+1})-\nabla h(\pi^{k+1}), w^{k+1}-w^{k}\rangle\\
     \stackrel{(\clubsuit)}{=}& \rho (D_h(\pi^k,\pi^{k+1})+D_h(\pi^{k+1},w^k)-D_h(\pi^k,w^k)) +\rho(D_h(w^k,w^{k+1}) + \\
     & D_h(w^{k+1},\pi^{k+1})-D_h(w^k,\pi^{k+1}))\\
     \stackrel{(\spadesuit)}{=} &  \rho D_h(\pi^k,\pi^{k+1}) + \rho D_h(w^k,w^{k+1}) - \rho D_h(\pi^k,w^k)+ \rho D_h(\pi^{k+1},w^{k+1}).
\end{align*}
Here, $(\clubsuit)$ uses the fact that the three-point property of Bregman divergence holds, i.e., 
 For any $y, z \in$ int $\operatorname{dom} h$ and $x \in \operatorname{dom} h$,
$$
D_{h}(x, z)-D_{h}(x, y)-D_{h}(y, z)=\langle\nabla h(y)-\nabla h(z), x-y\rangle.
$$
Moreover, $(\spadesuit)$ holds as $D_h(\cdot, \cdot)$ is symmetric on the whole sequence.
To proceed, this together with \eqref{eq:bapg_decrease} implies
\begin{equation}
\label{eq:suff_bapg}
F_\rho(\pi^{k+1},w^{k+1}) - F_\rho(\pi^k ,w^k) \leq -  \rho D_h(\pi^k,\pi^{k+1}) - \rho D_h(w^k,w^{k+1}). 
\end{equation}
% By letting $\kappa = \frac{\rho}{2}$, the sufficient decrease property is obtained. 

Summing up \eqref{eq:suff_bapg} from $k=0$ to $+\infty$, we obtain 
\[
F_\rho(\pi^{\infty},w^\infty)-F_\rho(\pi^0,w^0) \leq -\kappa_1 \sum_{k=0}^\infty \left(D_h(\pi^k,\pi^{k+1}) + D_h(w^k,w^{k+1}) \right). 
\]
\end{proof}
As the potential function $F_\rho(\cdot,\cdot)$ is coercive and $\{(\pi^{k},w^k)\}_{k \ge 0}$ is a bounded sequence, it means the left-hand side is bounded, which implies
\[
\sum_{k=0}^\infty \left(D_h(\pi^k,\pi^{k+1}) + D_h(w^k,w^{k+1}) \right) < +\infty. 
\]
Both $\{D_h(\pi^k,\pi^{k+1})\}_{k \ge 0}$ and $\{D_h(w^k,w^{k+1})\}_{k \ge 0}$ converge to zero. Thus, the following convergence result holds.
\begin{theorem}[Subsequent Convergence of BAPG]
\label{thm:sub_bapg}
Any limit point of the sequence \\$\{(\pi^{k},w^k)\}_{k \ge 0 }$ generated by BAPG belongs to $\mathcal{X}_{\text{BAPG}}$.
% \lemin{a stationary point defined in \eqref{eq:x_bapg}, i.e., $\mathcal{X}_{\text{BAPG}}$?}
\end{theorem}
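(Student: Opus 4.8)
The plan is to show that any cluster point $(\pi^\star,w^\star)$ of the BAPG sequence satisfies the fixed-point system \eqref{eq:x_bapg}, by passing to the limit in the per-iteration optimality conditions \eqref{eq:bapg_update_opt1}--\eqref{eq:bapg_update_opt2}. First I would extract the consequences of Proposition \ref{prop:basic_bapg}: since $F_\rho$ is coercive and the iterates are bounded, the telescoped sufficient-decrease inequality forces $\sum_{k}\bigl(D_h(\pi^k,\pi^{k+1})+D_h(w^k,w^{k+1})\bigr)<+\infty$, so both $D_h(\pi^k,\pi^{k+1})\to 0$ and $D_h(w^k,w^{k+1})\to 0$. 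Invoking the $\sigma$-strong convexity of $h$ (the bound $D_h(x,y)\ge\tfrac{\sigma}{2}\|x-y\|^2$ already used in Proposition \ref{prop:bapg_approx}), this upgrades to $\|\pi^{k+1}-\pi^k\|\to 0$ and $\|w^{k+1}-w^k\|\to 0$; that is, successive iterates become arbitrarily close.

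Next, fix a limit point $(\pi^\star,w^\star)$ and a subsequence with $(\pi^{k_j},w^{k_j})\to(\pi^\star,w^\star)$. Because the successive differences vanish, the shifted subsequences also converge to the same point, $\pi^{k_j+1}\to\pi^\star$ and $w^{k_j+1}\to w^\star$. This alignment is precisely what lets the alternating structure collapse in the limit: the $\pi$-update couples $\pi^{k_j+1}$ with $w^{k_j}$ while the $w$-update couples $w^{k_j+1}$ with $\pi^{k_j+1}$, and all of these tuples converge to $(\pi^\star,w^\star)$. Feasibility of the limit, $\pi^\star\in C_1$ and $w^\star\in C_2$, is inherited from $\pi^{k_j+1}\in C_1$, $w^{k_j+1}\in C_2$ and the closedness of $C_1,C_2$.

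Then I would pass to the limit in \eqref{eq:bapg_update_opt1}--\eqref{eq:bapg_update_opt2} along $k_j$. The key simplification is that $f$ is bilinear, so $\nabla_\pi f(\cdot,w^{k_j})=-D_X w^{k_j}D_Y$ is linear in $w^{k_j}$ and converges to the term denoted $\nabla f(w^\star)$ in \eqref{eq:x_bapg}, and likewise $\nabla_w f(\pi^{k_j+1},\cdot)\to\nabla f(\pi^\star)$; the Bregman terms converge by continuity of $\nabla h$ at the limit. Writing $p^{k_j}:=-\nabla_\pi f(\cdot,w^{k_j})-\rho(\nabla h(\pi^{k_j+1})-\nabla h(w^{k_j}))\in\partial g_1(\pi^{k_j+1})=\mathcal{N}_{C_1}(\pi^{k_j+1})$, these residuals converge to $p^\star:=-\nabla f(w^\star)-\rho(\nabla h(\pi^\star)-\nabla h(w^\star))$, and the outer semicontinuity (closedness of the graph) of the normal-cone map of the closed convex set $C_1$ yields $p^\star\in\mathcal{N}_{C_1}(\pi^\star)$, which is the first line of \eqref{eq:x_bapg}. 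The identical argument on the $w$-update produces the second line with $q^\star\in\mathcal{N}_{C_2}(w^\star)$, so $(\pi^\star,w^\star)\in\mathcal{X}_{\text{BAPG}}$.

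The main obstacle is the passage to the limit in the subdifferential inclusions rather than Step 1, which is essentially already in hand. Two points require care: aligning the alternating updates so that the decoupled arguments $(\pi^{k_j+1},w^{k_j})$ and $(\pi^{k_j+1},w^{k_j+1})$ share the limit $(\pi^\star,w^\star)$, which rests entirely on the successive-difference estimate; and the continuity of $\nabla h$ at $(\pi^\star,w^\star)$ combined with outer semicontinuity of $\mathcal{N}_{C_1},\mathcal{N}_{C_2}$. For the negative-entropy choice of $h$, $\nabla h$ is continuous only on the positive orthant, so the delicate step is to guarantee that the limit point has strictly positive entries (or otherwise control the boundary behavior); by contrast, closedness of the normal-cone maps is automatic for the polyhedral sets $C_1,C_2$.
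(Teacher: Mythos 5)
Your proposal is correct and follows essentially the same route as the paper: use the summability of the Bregman residuals from the sufficient-decrease property, pass to the limit in the per-iteration optimality conditions \eqref{eq:bapg_update_opt1}--\eqref{eq:bapg_update_opt2} along a convergent subsequence, and invoke the bilinearity of $f$ together with closedness of the normal cones. In fact you are more careful than the paper's own (very terse) argument, since you explicitly justify that the shifted iterates $\pi^{k_j+1}$ and $w^{k_j+1}$ converge to the same limit and flag the continuity of $\nabla h$ near the boundary for the entropy kernel, both of which the paper leaves implicit.
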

\begin{proof}
 Let $(\pi^\infty, w^\infty)$ be a limit point of the sequence $\{(\pi^{k},w^{k})\}_{k \ge 0 }$. Then, there exists a sequence $\{n_k\}_{k\ge0}$ such that $\{(\pi^{n_k},w^{n_k})\}_{k \ge 0 }$ converges to $(\pi^\infty, w^\infty)$. Replacing $k$ by $n_k$ in \eqref{eq:bapg_update_opt1} and \eqref{eq:bapg_update_opt2}, 
 taking limits on both sides as $k\rightarrow\infty$ 
\begin{align*}
& 0 \in \nabla_\pi f(\pi^{\infty},w^{\infty}) + \rho (\nabla h(\pi^{\infty})-\nabla h(w^{\infty})) + \partial g_1(\pi^{\infty}) \\
&  0 \in \nabla_w f(\pi^{\infty},w^{\infty})+ \rho (\nabla h(w^{\infty})-\nabla h(\pi^{\infty})) +\partial g_2(w^{\infty}).
\end{align*}
Based on the fact that $\nabla_\pi f(\pi^{\infty},w^{\infty}) = \nabla f(w^\infty)$ and $\nabla_w f(\pi^{\infty},w^{\infty}) = \nabla f(\pi^\infty)$, it can be easily concluded that $(\pi^\infty, w^\infty) \in \mathcal{X}_{\text{BAPG}}$.
\end{proof}

\begin{remark}
The symmetric assumption in Proposition \ref{prop:basic_bapg} does not hold for the KL divergence  in general. However, if we add a mild condition that the asymmetric ratio is bounded for iterations, similar results can be achieved. Moreover, when $h$ is a quadratic function, we can further obtain the global convergence result under the  Kurdyka-Lojasiewicz analysis framework 
developed in \citep{attouch2010proximal,attouch2013convergence}. 
\end{remark}

More importantly, the above subsequence convergence result can be extended to the global convergence if the Legendre function $h(\cdot)$ is quadratic.  
\begin{theorem}[Global Convergence of BAPG --- Quadratic Case]
The sequence \\ $\{(\pi^{k},w^k)\}_{k \ge 0 }$ converges to a critical point of $F_\rho(\pi, w)$. 
\end{theorem}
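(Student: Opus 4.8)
The plan is to upgrade the subsequential statement of Theorem~\ref{thm:sub_bapg} to convergence of the whole sequence by placing BAPG inside the Kurdyka--\L{}ojasiewicz (KL) descent framework of~\citep{attouch2010proximal,attouch2013convergence}. Write $z^k=(\pi^k,w^k)$. The decisive simplification in the quadratic case $h(x)=\tfrac12\langle x,Hx\rangle$ with $H\succ0$ is that $D_h(x,y)=\tfrac12\langle x-y,H(x-y)\rangle$ is symmetric (so the hypothesis of Proposition~\ref{prop:basic_bapg} is automatic) and comparable to the squared Euclidean norm, $\tfrac{\sigma}{2}\|x-y\|^2\le D_h(x,y)\le\tfrac{L_h}{2}\|x-y\|^2$ with $\sigma=\lambda_{\min}(H)$ and $L_h=\lambda_{\max}(H)$. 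Substituting the lower bound into the sufficient-decrease estimate~\eqref{eq:bapg_suff_decre} yields
\begin{equation*}
F_\rho(z^{k+1})-F_\rho(z^k)\le-\frac{\rho\sigma}{2}\left(\|\pi^{k+1}-\pi^k\|^2+\|w^{k+1}-w^k\|^2\right)=-a\,\|z^{k+1}-z^k\|^2,\qquad a=\frac{\rho\sigma}{2}>0,
\end{equation*}
which is the sufficient-decrease ingredient (H1) of the framework.

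Next I would verify the relative-error ingredient (H2): the existence of $b>0$ and $\xi^{k+1}\in\partial F_\rho(z^{k+1})$ with $\|\xi^{k+1}\|\le b\|z^{k+1}-z^k\|$. Reading off the elements $-\nabla_\pi f(\pi^k,w^k)-\rho(\nabla h(\pi^{k+1})-\nabla h(w^k))\in\partial g_1(\pi^{k+1})$ and $-\nabla_w f(\pi^{k+1},w^k)-\rho(\nabla h(w^{k+1})-\nabla h(\pi^{k+1}))\in\partial g_2(w^{k+1})$ from the optimality conditions~\eqref{eq:bapg_update_opt1}--\eqref{eq:bapg_update_opt2} and inserting them into $\partial F_\rho(z^{k+1})$, the $w$-block cancels \emph{exactly} because $\nabla_w f$ depends only on $\pi$, while the $\pi$-block reduces to
\begin{equation*}
\xi^{k+1}_\pi=\left[\nabla_\pi f(\pi^{k+1},w^{k+1})-\nabla_\pi f(\pi^k,w^k)\right]+\rho\left(\nabla h(w^k)-\nabla h(w^{k+1})\right).
\end{equation*}
Here I would use that $f$ is bilinear, so $\nabla_\pi f$ depends only on $w$ and the bracketed difference is controlled by $L_f\|w^{k+1}-w^k\|$ with $L_f=\sigma_{\text{max}}(D_X)\sigma_{\text{max}}(D_Y)$, while linearity of $\nabla h$ bounds the last term by $\rho L_h\|w^{k+1}-w^k\|$. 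Hence $\|\xi^{k+1}\|\le(L_f+\rho L_h)\|z^{k+1}-z^k\|$, which is (H2).

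It remains to check the continuity ingredient and the KL property, then assemble. Since every BAPG iterate satisfies $\pi^k\in C_1$ and $w^k\in C_2$ by construction, the indicators vanish along the sequence and $F_\rho(z^k)=f(z^k)+\rho D_h(z^k)$ is continuous; as $C_1,C_2$ are closed, any limit point is feasible and $F_\rho$ is continuous there, supplying the continuity ingredient. Crucially, in the quadratic case $F_\rho$ is the sum of a polynomial and the indicators of the polyhedra $C_1,C_2$, hence semialgebraic, so it satisfies the KL inequality at every point. With (H1), (H2), the continuity ingredient, the KL property, and boundedness of $\{z^k\}$ (established earlier) in hand, the abstract theorem of~\citep{attouch2013convergence} gives finite length, $\sum_k\|z^{k+1}-z^k\|<\infty$; thus $\{z^k\}$ is Cauchy and converges to a single point $z^\infty$, which by Theorem~\ref{thm:sub_bapg} lies in $\mathcal{X}_{\text{BAPG}}$ and is therefore a critical point of $F_\rho$.

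I expect the most delicate step to be the relative-error estimate, since it is precisely the bilinearity of $f$ together with the linearity of $\nabla h$ in the quadratic regime that makes the residual subgradient collapse to increments of the iterates; for a non-quadratic Legendre function both this cancellation and the semialgebraic route to the KL property break down, which is exactly why the global result is confined to the quadratic case.
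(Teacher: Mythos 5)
Your proposal is correct and follows essentially the same route as the paper's proof: sufficient decrease from Proposition~\ref{prop:basic_bapg}, a safeguard/relative-error bound obtained from the optimality conditions \eqref{eq:bapg_update_opt1}--\eqref{eq:bapg_update_opt2} by exploiting the bilinearity of $f$ and the linearity of $\nabla h$ in the quadratic case, and then the Kurdyka--\L{}ojasiewicz descent framework of \citet{attouch2013convergence}. Your write-up is in fact slightly more complete, since you explicitly verify the KL property via semialgebraicity and the continuity condition, which the paper leaves implicit.
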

\begin{proof}
To invoke the Kurdyka-Lojasiewicz analysis framework 
developed in \citep{attouch2010proximal,attouch2013convergence}, we have to establish two crucial properties --- sufficient decrease and safeguard condition. The first one has already been proven in Proposition \ref{prop:basic_bapg}. Then, we would like to show that there exists a constant $\kappa_2$ such that, 
\[
\dist2(0,   \partial F_\rho(\pi^{k+1},w^{k+1})) \leq \kappa_2 (\|\pi^{k+1}-\pi^k\|+\|w^{k+1}-w^k\|).
\]

At first, noting that 
\begin{equation*}
    \partial F_\rho(\pi^{k+1},w^{k+1}) = \left[
    \begin{aligned}
      & \nabla_\pi f(\pi^{k+1},w^{k+1}) + \rho (\pi^{k+1}-w^{k+1}) + \partial g_1(\pi^{k+1}) \\
      & \nabla_w f(\pi^{k+1},w^{k+1}) + \rho (w^{k+1}-\pi^{k+1}) + \partial g_2(w^{k+1})
    \end{aligned}
    \right].
\end{equation*}
Again, together with the updates \eqref{eq:bapg_update_opt1} and \eqref{eq:bapg_update_opt2}, this implies
 \begin{equation*}
 \begin{aligned}
 \dist2^2(\mathbf{0},\partial F_\rho(\pi^{k+1},w^{k+1})) & \leq \|\nabla_\pi f(\pi^{k+1},w^{k+1})-\nabla_\pi f(\pi^{k},w^{k})\|^2 + \rho \| w^k- w^{k+1}\|^2 \\
 & + \|\nabla_w f(\pi^{k+1},w^{k})-\nabla_w f(\pi^{k+1},w^{k+1})\|^2 \\
 & \leq (\sigma^2_{\text{max}}(D_X)\sigma^2_{\text{max}}(D_Y)+\rho) \| w^k- w^{k+1}\|^2, 
 \end{aligned}
 \end{equation*}
 where $\sigma_{\text{max}}(\cdot)$ denotes the maximum singular value. 
 By letting $\kappa_2 = \sqrt{\sigma^2_{\text{max}}(D_X)\sigma^2_{\text{max}}(D_Y)+\rho}$, we get the desired result.
\end{proof}

To the best of our knowledge, the convergence analysis of alternating projected gradient descent methods has only been given under the convex setting, see \citep{wang2016stochastic,nedic2011random} for details.
In this paper, by heavily exploiting the error
bound condition of the GW problem,  we take the first step and provide a new path to conduct the analysis of alternating projected descent
method for nonconvex problems, which could be of independent interest.

\subsection{Local Linear Convergence of BPG and hBPG}
\label{sec:bpg}
Next, we investigate the convergence behavior of BPG when applied to problem \eqref{eq:gw_qua}.
At the heart of our convergence rate analysis is the Luo-Tseng local error bound condition (cf. Proposition \ref{prop:erb}), which has been demonstrated as a crucial tool to establish the linear convergence rate of first-order algorithms~\citep{zhou2017unified}.  
Recall that a sequence  $\{x^k\}_{k\ge0}$ is said to converge R-linearly (resp. Q-linearly) to  a point $x^\infty$ if there exist  constants $\gamma>0$  and $\eta \in (0,1) $ such that $\|x^k-x^\infty\|\leq \gamma \eta^{k}$  for all $k\ge0$ (resp. if there exists an index $K\ge0$ and a constant $\eta \in (0,1)$ such that $\|x^{k+1}-x^\infty\|/\|x^{k}-x^\infty\|\leq \eta$ for all $k\ge K$).

\begin{theorem}[Local Linear Convergence of BPG]
\label{thm:bpg}
Suppose that in Problem \eqref{eq:gw_qua}, the step size $t_k$ in \eqref{eq:bpg_update} satisfies $0 <\underline{t}\leq t_k<\overline{t}\leq {\sigma}/{L_f}$ for $k \ge 0$ where $\underline{t},\overline{t}$ are given constants and $L_f$ is the gradient Lipschitz constant of $f$. Moreover, suppose that the sequence $\{\pi^k\}_{k\ge0}$ has a element-wise lower bound, i.e., $\pi_k \ge \epsilon>0$, the sequence of solutions $\{\pi^k\}_{k \ge 0}$ generated by BPG converges R-linearly to an element in the critical point set $\mathcal{X}$. 
\end{theorem}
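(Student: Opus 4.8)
The plan is to follow the by-now-standard three-ingredient recipe for proving linear convergence of first-order methods under a Luo--Tseng error bound (as systematized in \citep{zhou2017unified}): (i) a sufficient-decrease inequality, (ii) a bound on the stationarity residual in terms of the step length, and (iii) the error bound of Proposition \ref{prop:erb}. The essential role of the new hypothesis $\pi^k \ge \epsilon > 0$ (entrywise) is geometric: because $h$ is the relative entropy, $\nabla h$ is only strongly monotone and Lipschitz on regions bounded away from the boundary of the nonnegative orthant. On the region $\{\pi : \pi \ge \epsilon\}$ the separable entropy has a diagonal Hessian with entries bounded in $[\sigma, 1/\epsilon]$, so $h$ is simultaneously $\sigma$-strongly convex and $\tfrac{1}{\epsilon}$-smooth there, and the Bregman geometry is two-sidedly comparable to the Euclidean one. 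This is exactly what lets the classical Euclidean analysis of projected-gradient-type methods transfer to the Bregman update \eqref{eq:bpg_update}.

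First I would establish sufficient decrease. Starting from the optimality condition of the subproblem \eqref{eq:bpg_update},
\[
0 \in \nabla f(\pi^k) + \tfrac{1}{t_k}\bigl(\nabla h(\pi^{k+1}) - \nabla h(\pi^k)\bigr) + \mathcal{N}_{C_1\cap C_2}(\pi^{k+1}),
\]
and combining the three-point identity for $D_h$ with the descent lemma $f(\pi^{k+1}) \le f(\pi^k) + \langle \nabla f(\pi^k), \pi^{k+1}-\pi^k\rangle + \tfrac{L_f}{2}\|\pi^{k+1}-\pi^k\|^2$, the $\sigma$-strong convexity of $h$, and the step-size cap $t_k \le \sigma/L_f$, I would derive $f(\pi^{k+1}) \le f(\pi^k) - c_1 \|\pi^{k+1}-\pi^k\|^2$ for some $c_1 > 0$. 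Since $f$ is bounded below on the compact set $C_1\cap C_2$, summing yields $\sum_k \|\pi^{k+1}-\pi^k\|^2 < \infty$, hence $\|\pi^{k+1}-\pi^k\| \to 0$.

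Next I would bound the optimality residual $R(\pi^{k+1}) := \|\pi^{k+1} - \proj_{C_1\cap C_2}(\pi^{k+1}+D_X\pi^{k+1}D_Y)\|$ by the step length. Rewriting $\pi^{k+1}$ via the subproblem optimality condition as a (Bregman) projection, then using the nonexpansiveness of $\proj_{C_1\cap C_2}$, the Lipschitz continuity of $\nabla f$ (with constant governed by $\sigma_{\max}(D_X)\sigma_{\max}(D_Y)$), and the Lipschitz continuity of $\nabla h$ on $\{\pi \ge \epsilon\}$ (constant $1/\epsilon$), I would obtain $R(\pi^{k+1}) \le c_2 \|\pi^{k+1}-\pi^k\|$. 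Because the step lengths vanish, eventually $R(\pi^{k+1})$ falls below the radius $\epsilon$ of Proposition \ref{prop:erb}, so the error bound applies and gives
\[
\operatorname{dist}(\pi^{k+1},\mathcal{X}) \le \tau R(\pi^{k+1}) \le \tau c_2 \|\pi^{k+1}-\pi^k\|.
\]

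Finally I would assemble these into linear convergence. Writing $\Delta_k := f(\pi^k) - f^\infty$ for the limiting critical value $f^\infty$, I would chain the error bound with sufficient decrease — after relating $\Delta_k$ to $\operatorname{dist}(\pi^k,\mathcal{X})$ through a cost-to-go estimate using the descent lemma and the definition \eqref{eq:gw_critical} of $\mathcal{X}$ — to produce a contraction $\Delta_{k+1} \le \theta\, \Delta_k$ with $\theta \in (0,1)$. Q-linear decay of $\Delta_k$ then forces $\|\pi^{k+1}-\pi^k\|$ to decay geometrically, so $\{\pi^k\}$ is Cauchy and converges R-linearly to some $\pi^\infty \in \mathcal{X}$. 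I expect the main obstacle to be the nonconvex wrinkle in this last step: since $f$ is a nonconvex quadratic, one must rule out the iterates drifting among critical points carrying distinct objective values. This is the classical \emph{proper separation of isocost surfaces} issue, which I would resolve by invoking the finiteness of the set of critical values attained on $\mathcal{X}$ (a polyhedrally constrained quadratic has finitely many stationary values) together with $\|\pi^{k+1}-\pi^k\| \to 0$, confining the tail of the sequence to a single isocost piece and a single connected component of $\mathcal{X}$. The other point requiring care is verifying that the entrywise lower bound $\pi^k \ge \epsilon$ persists along the iterates, so that the constants $\sigma$ and $1/\epsilon$ — and hence $c_1, c_2, \theta$ — are uniform over $k$.
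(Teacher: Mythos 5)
Your proposal is correct and follows essentially the same route as the paper: the three-property template of \citet{zhou2017unified} (sufficient descent, cost-to-go estimate, safeguard) combined with the Luo--Tseng error bound of Proposition \ref{prop:erb}, with the element-wise lower bound $\pi^k \ge \epsilon$ used exactly as you describe to make the entropy kernel $\tfrac{1}{\epsilon}$-smooth so the Bregman and Euclidean geometries are two-sidedly comparable, and the isocost-separation issue handled via the polyhedrally constrained quadratic structure (Lemma 3.1 of \citealp{luo1992error}, i.e., Assumption \ref{ass:iso}). The persistence of the lower bound that you flag at the end is dealt with in the paper by noting that entropy keeps the iterates strictly positive (or by a small perturbation), as in Remark \ref{remark1}.
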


In terms of the convergence analysis of first-order iterative methods solving \eqref{eq:gw_qua}, the \textit{Luo-Tseng Local Error Bound Condition} (cf. Proposition \ref{prop:erb}) has been demonstrated as a crucial tool to unveiling the linear convergence rate under the convex setting~\citep{zhou2017unified,hou2013linear}. The following folkloric result provides a unified template for the convergence rate analysis of first-order methods: 

 \begin{fact}(cf. Fact 1 in \cite{zhou2017unified})
 \label{fact}
 Consider the optimization problem \eqref{eq:gw_qua}, whose the critical point set $\mathcal{X}$ is assumed to be non-empty. Suppose that the generated sequence $\left\{\pi^{k}\right\}_{k \geq 0}$ satisfying the following properties:
 \begin{enumerate}[label=(\Alph*)]
     \item (\textbf{Sufficient Descent}) There exist a constant $\kappa_{1}>0$ and an index $k_{1} \geq 0$ such that for $k \geq k_{1}$,
    \[
    F\left(\pi^{k+1}\right)-F\left(\pi^{k}\right) \leq-\kappa_{1}\left\|\pi^{k+1}-\pi^{k}\right\|^{2}. 
    \]
    \item 
     (\textbf{Cost-to-Go Estimate}) There exist a constant $\kappa_{2}>0$ and an index $k_{2} \geq 0$ such that for $k \geq k_{2}$,
    \[
    F\left(\pi^{k+1}\right)-F(\pi^\star) \leq \kappa_{2}\left(\dist2^{2}\left(\pi^{k}, \mathcal{X}\right)+\left\|\pi^{k+1}-\pi^{k}\right\|^{2}\right),
    \]
    where $\pi^\star$ is the limit point of the sequence $\{\pi^k\}_{k \ge 0}$. 
    \item (\textbf{Safeguard}) There exist a constant $\kappa_{3}>0$ and an index $k_{3} \geq 0$ such that for $k \geq k_{3}$,
    \[
    \left\|R(\pi^{k})\right\|_{2} \leq \kappa_{3}\left\|\pi^{k+1}-\pi^{k}\right\|_{2},
    \]
    where $R(\pi^{k})$ is the proximal residual function, defined as 
    \[
    R(\pi^{k}) = \|\pi^k - \prox_g(\pi^k-\nabla f(\pi^k))\|. 
    \]
 \end{enumerate}
 Suppose further that Problem \eqref{eq:gw_qua} possesses the Luo-Tseng error bound condition. Then, the sequence $\left\{F\left(\pi^{k}\right)\right\}_{k \geq 0}$ converges $Q$-linearly to $F(\pi^\star)$ and the sequence $\left\{\pi^{k}\right\}_{k \geq 0}$ converges $R$-linearly to some $\pi^{\star} \in \mathcal{X}$.
 \end{fact}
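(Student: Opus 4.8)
The plan is to run the classic Luo--Tseng chaining argument: combine (A)--(C) with the error bound to force a geometric decay of the function-value gap, and then bootstrap iterate convergence from the resulting summable (indeed geometrically small) step sizes. Write $F^\star := \lim_k F(\pi^k)$; this limit exists because (A) makes $\{F(\pi^k)\}$ nonincreasing while $F$ is bounded below (on the compact feasible set $C_1\cap C_2$, $f$ is continuous and $g_1,g_2$ are indicators). Set $\Delta_k := F(\pi^k)-F^\star \ge 0$, reading the cost-to-go estimate (B) with $F(\pi^\star)$ replaced by $F^\star$. First I would record the two immediate consequences of (A): the telescoping bound $\|\pi^{k+1}-\pi^k\|^2 \le \kappa_1^{-1}(\Delta_k-\Delta_{k+1})$, and, by summing, $\sum_k\|\pi^{k+1}-\pi^k\|^2<\infty$, so in particular $\|\pi^{k+1}-\pi^k\|\to 0$.

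The core step is to convert (B) into a contraction on $\Delta_k$. By the safeguard (C), $\|R(\pi^k)\|\le\kappa_3\|\pi^{k+1}-\pi^k\|\to 0$, so past some index the residual falls below the threshold $\epsilon$ and the Luo--Tseng error bound activates, giving $\operatorname{dist}(\pi^k,\mathcal{X})\le \tau\|R(\pi^k)\|\le \tau\kappa_3\|\pi^{k+1}-\pi^k\|$. Substituting this into (B) yields
\[
\Delta_{k+1}\le \kappa_2\bigl(\tau^2\kappa_3^2+1\bigr)\,\|\pi^{k+1}-\pi^k\|^2 \le \frac{\kappa_2(\tau^2\kappa_3^2+1)}{\kappa_1}\,(\Delta_k-\Delta_{k+1}).
\]
Writing $C$ for the constant in front and rearranging gives $\Delta_{k+1}\le \tfrac{C}{1+C}\Delta_k=:\eta\,\Delta_k$ with $\eta\in(0,1)$, valid for all large $k$; hence $\{F(\pi^k)\}$ converges $Q$-linearly to $F^\star$.

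Then I would bootstrap to $R$-linear convergence of the iterates. Feeding the $Q$-linear bound $\Delta_k\le \eta^{\,k-K}\Delta_K$ back into the (A)-telescoping inequality gives $\|\pi^{k+1}-\pi^k\|\le \kappa_1^{-1/2}\Delta_k^{1/2}\le \gamma_0(\sqrt{\eta})^{k}$ for a constant $\gamma_0$. Summing the geometric tail $\sum_{j\ge k}\|\pi^{j+1}-\pi^j\|$ shows $\{\pi^k\}$ is Cauchy, hence convergent to some $\pi^\star$, with $\|\pi^k-\pi^\star\|\le \gamma(\sqrt{\eta})^{k}$, which is exactly $R$-linear convergence. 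Finally $\|R(\pi^k)\|\to 0$ together with continuity of the residual map $R(\cdot)$ forces $R(\pi^\star)=0$, so $\pi^\star\in\mathcal{X}$; continuity of $F$ then identifies $F^\star=F(\pi^\star)$, retroactively legitimizing the use of $\pi^\star$ in (B).

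The main obstacle is the bookkeeping around when the error bound may legitimately be invoked: (A)--(C) only hold past their respective indices $k_1,k_2,k_3$, and the error bound only applies once $\|R(\pi^k)\|\le\epsilon$. I would take $K:=\max\{k_1,k_2,k_3,k_4\}$, where $k_4$ is chosen (using $\|R(\pi^k)\|\to 0$) so that the residual threshold is met for $k\ge k_4$, and run the contraction from $K$ onward; the finitely many earlier terms do not affect the geometric rate. A secondary point to handle carefully is the residual convention: Proposition \ref{prop:erb} phrases the error bound through the projected-gradient residual $\|\pi-\proj_{C_1\cap C_2}(\pi+D_X\pi D_Y)\|$, whereas (C) uses the proximal residual $R(\pi^k)=\|\pi^k-\prox_g(\pi^k-\nabla f(\pi^k))\|$. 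These two natural residuals are equivalent up to constants (a standard Lipschitz-type property of prox/projection residuals under bounded step sizes), so the error bound transfers to $R$ after absorbing a constant into $\tau$, and the argument above goes through unchanged.
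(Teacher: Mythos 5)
Your proposal is correct, but note that the paper never proves this Fact at all: it is imported verbatim as a black box from Fact~1 of \citet{zhou2017unified} (hence the ``cf.''), and the paper's own proof effort goes entirely into verifying hypotheses (A)--(C) for BPG in Theorem~\ref{thm:bpg}. What you have written is essentially the canonical Luo--Tseng chaining argument that underlies the citation: telescoping (A) to get $\kappa_1\|\pi^{k+1}-\pi^k\|^2\le\Delta_k-\Delta_{k+1}$, using (C) plus the error bound to dominate $\dist2^2(\pi^k,\mathcal{X})$ by $\|\pi^{k+1}-\pi^k\|^2$, feeding this into (B) to obtain $\Delta_{k+1}\le C(\Delta_k-\Delta_{k+1})$ and hence the ratio $\eta=C/(1+C)<1$, then bootstrapping $R$-linear iterate convergence from the geometric decay of $\Delta_k^{1/2}$ via a Cauchy-tail sum. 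You also correctly flagged the two points that genuinely need care: the index bookkeeping ($K=\max\{k_1,k_2,k_3,k_4\}$, with $k_4$ activating the error bound once $\|R(\pi^k)\|\le\epsilon$), and the residual mismatch --- Proposition~\ref{prop:erb} is stated with $\|\pi-\proj_{C_1\cap C_2}(\pi+D_X\pi D_Y)\|$, which is the projected-gradient residual with step $1/2$ since $\nabla f(\pi)=-2D_X\pi D_Y$, whereas $R$ uses step $1$; the standard monotonicity of $t\mapsto\|\pi-\proj(\pi-t\nabla f(\pi))\|$ and of the same quantity divided by $t$ gives the two-sided equivalence (here with factor $2$), so the transfer is legitimate. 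Two cosmetic caveats you may wish to make explicit: the $Q$-linear ratio statement needs the trivial case $\Delta_K=0$ (eventual constancy) set aside, and the final step $R(\pi^\star)=0\Rightarrow\pi^\star\in\mathcal{X}$ implicitly uses $\mathcal{N}_{C_1\cap C_2}(\pi)=\mathcal{N}_{C_1}(\pi)+\mathcal{N}_{C_2}(\pi)$, valid here by polyhedrality of $C_1$ and $C_2$ --- or, more simply, conclude from $\dist2(\pi^k,\mathcal{X})\to0$, $\pi^k\to\pi^\star$, and closedness of $\mathcal{X}$.
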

 
Before giving the proof details, 
 we would like to highlight the difference from  the technique used in \citep{hou2013linear} and \citep{zhou2017unified}.  The vanilla analysis in \citep{hou2013linear} and \citep{zhou2017unified} can only handle the structured convex problem but GW is nonconvex. We have to exploit the isolation property around the local region to establish the cost-to-go estimate property. Intuitively, the isolation property means that the sequence $\{\pi^k\}_{k \ge 0}$ will eventually settle down at a Euclidean ball. More precisely, we refer the reader to Assumption \ref{ass:iso} for details, which can be verified for the Gromov Wasserstein problem. 
 The geometry from the Bregman divergence can be non-Euclidean. It is not an easy task to extend the analysis in Euclidean space to the general Bregman distance-derived metric space. 
 \begin{assumption}
 \label{ass:iso}
For any  $\bar{\pi} \in \mathcal{x}$, there exists $\delta>0$ so that $F(\pi) = F(\bar{\pi})$ whenever $\pi \in \mathcal{X}$
and $\|\pi-\bar{\pi}\|< \delta $. 
 \end{assumption}

\begin{proof}
\paragraph{\textbf{Step 1:} Sufficient Decrease Property \\}
It is worthwhile noting that $f(\pi)$ is a quadratic function, i.e., $f(\pi) = -\text{Tr}(D_X\pi D_Y\pi^T)$, then $f(\pi)$ is gradient Lipschitz continuous with the modulu constant $\sigma_{\text{max}}(D_X)\sigma_{\text{max}}(D_Y)$, where $\sigma_{\text{max}}(\cdot)$ denotes the largest singular value. To simplify the notation, let $L_f =\sigma_{\text{max}}(D_X)\sigma_{\text{max}}(D_Y)$. 
\begin{align*}
    & F(\pi^{k+1})-F(\pi^k)  \\
    & \leq f(\pi^{k}) + \nabla f(\pi^{k})^T(\pi^{k+1}-\pi^k)+\frac{L_f}{2}\|\pi^{k+1}-\pi^k\|^2 +g (\pi^{k+1}) - f(\pi^k)-g(\pi^k) \\
    &  \stackrel{(\clubsuit)}{\leq} \nabla f(\pi^{k})^T(\pi^{k+1}-\pi^k)+ \frac{L_f}{\sigma}D_h(\pi^{k+1},\pi^k)+g (\pi^{k+1}) -g(\pi^k) \\
    & = \nabla f(\pi^{k})^T(\pi^{k+1}-\pi^k)+ \frac{1}{t_k} D_h(\pi^{k+1},\pi^k)+g (\pi^{k+1}) -g(\pi^k)+\left(\frac{L_f}{\sigma}-\frac{1}{t_k}\right)D_h(\pi^{k+1},\pi^k)\\
    & = \hat{F}(\pi^{k+1};\pi^k) - \hat{F}(\pi^k;\pi^k)+\left(\frac{L_f}{\sigma}-\frac{1}{t_k}\right)D_h(\pi^{k+1},\pi^k)\\
   & \stackrel{(\spadesuit)}{\leq }\left(\frac{L_f}{\sigma}-\frac{1}{t_k}\right)D_h(\pi^{k+1},\pi^k) \\
   & \stackrel{(\clubsuit)}{\leq} -\left(\frac{1}{t_k}-\frac{L_f}{\sigma}\right)\|\pi^{k+1}-\pi^k\|^2, 
\end{align*}
where $(\clubsuit)$ holds due to the fact $D_h(\pi^{k+1}-\pi^k) \ge \frac{\sigma}{2}\|\pi^{k+1}-\pi^k\|^2$ and $(\spadesuit)$ follows as the optimal solution of $\hat{F}(\pi;\pi^{k})$ is $\pi^{k+1}$, i.e., \eqref{eq:bpg_update}. By letting $\kappa_1 =\frac{1}{t_k}-\frac{L_f}{\sigma}>0$, we get the desired result. 

\paragraph{\textbf{Step 2:} Cost-to-Go Estimate Property \\}
Let $\overline{\pi}^k$ be the projection of $\pi^k$ onto $\mathcal{X}$. Again, we aim at exploiting the structure information from the BPG update, i.e., \eqref{eq:bpg_update}. Similarly, as the optimal solution of  $\hat{F}(\pi;\pi^{k})$ is $\pi^{k+1}$, we have, 
\begin{align*}
    & \hat{F}(\pi^{k+1};\pi^{k}) \leq \hat{F}(\overline{\pi}^k;\pi^{k}) \\
    \Rightarrow &  \nabla f(\pi^k)^T \pi^{k+1} + \frac{1}{t_k}D_h(\pi^{k+1},\pi^k) + g(\pi^{k+1}) \leq  \nabla f(\pi^k)^T \overline{\pi}^k + \frac{1}{t_k}D_h(\overline{\pi}^k,\pi^k) + g(\overline{\pi}^k) \\
    \Rightarrow &   \nabla f(\pi^k)^T (\pi^{k+1}-\overline{\pi}^k) + g(\pi^{k+1})-g(\overline{\pi}^k) \leq \frac{1}{t_k}D_h(\overline{\pi}^k,\pi^k)- \frac{1}{t_k}D_h(\pi^{k+1},\pi^k).
\end{align*}
It implies
\begin{equation*}
    \nabla f(\pi^k)^T (\pi^{k+1}-\overline{\pi}^k) + g(\pi^{k+1})-g(\overline{\pi}^k) \leq \frac{1}{t_k}D_h(\overline{\pi}^k,\pi^k)\leq \frac{L}{t_k}\|\overline{\pi}^k-\pi^k\|_F^2 \leq  \frac{L}{\underline{t}}\dist2^2(\pi^k, \mathcal{X}),
\end{equation*}
where $L$ is a constant. 
% where $L_h$ is the smoothness parameter for $h$ on the compact set $K \subseteq \text{int}(\text{dom}(h))$.

By the mean value theorem and $f$ is continuous differentiable, there exists a $\hat{\pi}^k \in [\overline{\pi}^k, \pi^{k+1}]$ such that, 
\[
f(\pi^{k+1}) -f(\overline{\pi}^k)= \nabla f(\hat{\pi}^k)^T(\pi^{k+1}-\overline{\pi}^k).
\]
Hence, we compute, 
\begin{align*}
    & F(\pi^{k+1})-F(\overline{\pi}^k) \\
    = &  \nabla f(\hat{\pi}^k)^T(\pi^{k+1}-\overline{\pi}^k)+g(\pi^{k+1})-g(\overline{\pi}^k)\\
    = & \nabla f({\pi}^k)^T(\pi^{k+1}-\overline{\pi}^k) + g(\pi^{k+1})-g(\overline{\pi}^k)+(\nabla f(\hat{\pi}^k)-\nabla f({\pi}^k))^T(\pi^{k+1}-\overline{\pi}^k)\\
    \leq & \frac{L}{\underline{t}}\dist2^2(\pi^k, \mathcal{X}) + (\nabla f(\hat{\pi}^k)-\nabla f({\pi}^k))^T(\pi^{k+1}-\overline{\pi}^k) \\
     \leq & \frac{L}{\underline{t}}\dist2^2(\pi^k, \mathcal{X}) + L_f \|\hat{\pi}^k-{\pi}^k\|\|\pi^{k+1}-\overline{\pi}^k\|\\
    \leq & \frac{L}{\underline{t}}\dist2^2(\pi^k, \mathcal{X}) + L_f \left(\|\hat{\pi}^k-{\pi}^{k+1}\| + \|{\pi}^{k+1}-\pi^k\|\right)\|\pi^{k+1}-\overline{\pi}^k\| \\
    \leq & \frac{L}{\underline{t}}\dist2^2(\pi^k, \mathcal{X}) + L_f\left(\|\overline{\pi}^k-{\pi}^{k+1}\| + \|{\pi}^{k+1}-\pi^k\|\right)\|\pi^{k+1}-\overline{\pi}^k\|\\
    \leq & \frac{L}{\underline{t}}\dist2^2(\pi^k, \mathcal{X}) + \frac{3L_f}{2}\|\overline{\pi}^k-{\pi}^{k+1}\|^2 + \frac{L_f}{2}\|{\pi}^{k+1}-\pi^k\|^2\\
    \leq & \frac{L}{\underline{t}}\dist2^2(\pi^k, \mathcal{X}) + {3L_f}\|\overline{\pi}^k-{\pi}^{k}\|^2 + \frac{7L_f}{2}\|{\pi}^{k+1}-\pi^k\|^2\\
    = & \left(\frac{L}{\underline{t}}+3L_f\right)\dist2^2(\pi^k, \mathcal{X}) + \frac{7L_f}{2}\|{\pi}^{k+1}-\pi^k\|^2 \\
    \leq & \max\left(\left(\frac{L}{\underline{t}}+3L_f\right),\frac{7L_f}{2}\right)\left(\dist2^2(\pi^k, \mathcal{X}) +\|{\pi}^{k+1}-\pi^k\|^2\right).
\end{align*}

For the relative entropy function $h(x) = x\log x$, we can show that $L =\frac{2}{\epsilon} = 2L_h(K)$. Since the sequence $\{\pi^k\}_{k \ge 0}$ is lower bounded by the constant $\epsilon$, i.e., $\pi^k \ge \epsilon I$, then we can conclude 
\[
D_h(\overline{\pi}^k,\pi^k)\leq \frac{2}{\epsilon}\|\overline{\pi}^k-\pi^k\|_F^2,
\]
Let $I =\{(i,j)\mid \overline{\pi}^k_{ij}\geq \frac{\epsilon}{2}\}$ and $\bar{I}$ be its supplementary set. Then, we have
\begin{align*}
    D_h(\overline{\pi}^k,\pi^k) =& \sum_{(i,j)\in I} D_h(\overline{\pi}^k_{ij},\pi^k_{ij}) + \sum_{(i,j)\in \bar{I}} D_h(\overline{\pi}^k_{ij},\pi^k_{ij})\\
    \leq & \sum_{(i,j)\in I} \frac{2}{\epsilon} (\overline{\pi}^k_{ij}-\pi^k_{ij})^2+ \sum_{(i,j)\in \bar{I}} D_h(\overline{\pi}^k_{ij},\pi^k_{ij})\\
    \leq & \sum_{(i,j)\in I} \frac{2}{\epsilon} (\overline{\pi}^k_{ij}-\pi^k_{ij})^2+ \sum_{(i,j)\in \bar{I}} (\pi^k_{ij}-\overline{\pi}^k_{ij})\\
    \leq & \sum_{(i,j)\in I} \frac{2}{\epsilon} (\overline{\pi}^k_{ij}-\pi^k_{ij})^2+ \sum_{(i,j)\in \bar{I}} \frac{2}{\epsilon}(\pi^k_{ij}-\overline{\pi}^k_{ij})^2\\
    =&\frac{2}{\epsilon}\|\overline{\pi} - \pi\|_F^2,
\end{align*}
where the first inequality holds because the second-order direvative $h''(x) = \frac{1}{x} \leq \frac{2}{\epsilon}$ for all $x\geq \frac{\epsilon}{2}$, the second inequality holds because $D_h(x,y) = x\log \frac{x}{y} + y-x \leq y-x$ if $x\leq y$, and the third inequality holds because $\pi^k_{ij}-\overline{\pi}^k_{ij}\geq \epsilon - \frac{\epsilon}{2} = \frac{\epsilon}{2}$ for all $(i,j)\in \bar{I}$.

 At last, by applying Lemma 3.1 in \citep{luo1992error}, we know $F(\cdot)$ enjoys the isolation property around the local region. Therefore, by letting $\kappa_2 =\max\left(\left(\frac{2 L_h(K)}{\underline{t}}+3L_f\right),\frac{7L_f}{2}\right) $, we obtain the desired result. 

\paragraph{\textbf{Step 3:} Safeguard Property \\}
Again, invoking the optimality condition of the main BPG update \eqref{eq:bpg_update}, we have 
\begin{equation}
\label{eq:safeg}
 \begin{aligned}
    & 0 \in \nabla f(\pi^k)+\partial g(\pi^{k+1}) +\frac{1}{t_k}\nabla D_h(\pi^{k+1},\pi^k) \\
    &  0 \in \nabla f(\pi^{k+1})+\partial g(\pi^{k+1})+\nabla f(\pi^k)-\nabla f(\pi^{k+1}) + \frac{1}{t_k}(\nabla h(\pi^{k+1})-\nabla h(\pi^k)).
\end{aligned}
\end{equation}
Upon this, we have 
\begin{align*}
\textnormal{dist}(0,\partial F(\pi^{k+1})) & \leq L_f\|\pi^{k+1}-\pi^k\| + \frac{1}{t_k}\|\nabla h(\pi^{k+1})-\nabla h(\pi^k)\|\\
& \leq \left(L_f+\frac{L_h(K)}{\underline{t}}\right)\|\pi^{k+1}-\pi^k\|. 
\end{align*}
Moreover, due to Lemma 4.1 in \citep{li2018calculus}, it is so nice that we can connect the proximal residual function $R(\pi^k)$ and $\textnormal{dist}(0,\partial F(\pi^{k+1}))$. That is, 
\[
\|R(\pi^{k+1})\| \leq \textnormal{dist}(0,\partial F(\pi^{k+1})).
\]
Then, 
\begin{align*}
\|R(\pi^{k})\| & = \|R(\pi^{k})-R(\pi^{k+1})+R(\pi^{k+1})\|\\
& \leq\|R(\pi^{k})-R(\pi^{k+1})\|+ \|R(\pi^{k+1})\| \\
& \leq \|R(\pi^{k})-R(\pi^{k+1})\| +\textnormal{dist}(0,\partial F(\pi^{k+1})) \\
& = \|\pi^k-\pi^{k+1}+ \prox_g(\pi^{k+1}-\nabla f(\pi^{k+1}))- \prox_g(\pi^k-\nabla f(\pi^k))\|+\textnormal{dist}(0,\partial F(\pi^{k+1}))\\
& \leq \|\pi^k-\pi^{k+1}\| +\|\prox_g(\pi^{k+1}-\nabla f(\pi^{k+1}))- \prox_g(\pi^k-\nabla f(\pi^k))\|+\textnormal{dist}(0,\partial F(\pi^{k+1}))\\
& \leq (L_f+2)\|\pi^k-\pi^{k+1}\|+\textnormal{dist}(0,\partial F(\pi^{k+1}))\\
& \leq \left(2L_f+\frac{L_h(K)}{\underline{t}}+2\right)\|\pi^{k+1}-\pi^k\|. 
\end{align*}

This, together with \eqref{eq:safeg}, implies the safeguard property with $\kappa_3 =\left(2L_f+\frac{L_h(K)}{\underline{t}}+2\right) $.
\end{proof}

\begin{remark}\label{remark1}
Let us comment on the validity of the element-wise lower bound assumption in Theorem \ref{thm:bpg}. Note that  the iterates of BPG will always be strictly positive due to the entropy term. Hence, a finite number of iterations, we can find an $\epsilon$ to satisfy the assumption. Alternatively, we can  add a small perturbation in each iteration to satisfy this assumption, that is, 
$\pi^{k+1} = (1-\epsilon){\pi}^{k+1}  + \epsilon \mathbf{I}_{nm}$. 
Our experience suggests that such a perturbation will almost not affect the performance. It is worth noting that the element-wise lower bound assumption is quite standard in existing analyses of Bregman proximal gradient methods in the literature~\citep{bauschke2017descent,hanzely2021accelerated,beck2017first}. It remains open, even in the optimization literature, whether such an assumption can be removed.
\end{remark}

\section{Experiment Results}\label{sec:exp}
In this section, we provide extensive experiment results to validate the effectiveness of the proposed BAPG and hBPG algorithms on various representative graph learning tasks, including graph alignment, graph partition, and shape matching.
All simulations are implemented using Python 3.9 on a high-performance computing server running Ubuntu 20.04 with an Intel(R) Xeon(R) Gold 6226R CPU and an NVIDIA GeForce RTX 3090 GPU. For all methods conducted in the experiment part, we use  the relative error $\|\pi^{k+1}-\pi^k\|_2 / \|\pi^k\|_2\leq 1e^{-6}$ and the maximum iteration as the stopping criterion, i.e., $\min\{k \in \mathbb{Z}:\|\pi^{k+1}-\pi^k\|_2 / \|\pi^k\|_2\leq 1e^{-6} \, \textnormal{and} \, k\leq 2000 \}$.

\subsection{Toy 2D Matching Problem}
In this subsection, we study a toy matching problem in 2D to corroborate our theoretical insights and results in Sec \ref{sec:method} and \ref{sec:conv}. 
Fig. \ref{fig:syn} (a) shows an example of mapping a two-dimensional shape without any symmetries to a rotated version of the same shape. Here, we sample 300 and 400 points from source and target shapes respectively and use the Euclidean distance to construct the distance matrices $D_X$ and $D_Y$.

\begin{figure*}[!t]
	\centering
	\includegraphics[width=\textwidth]{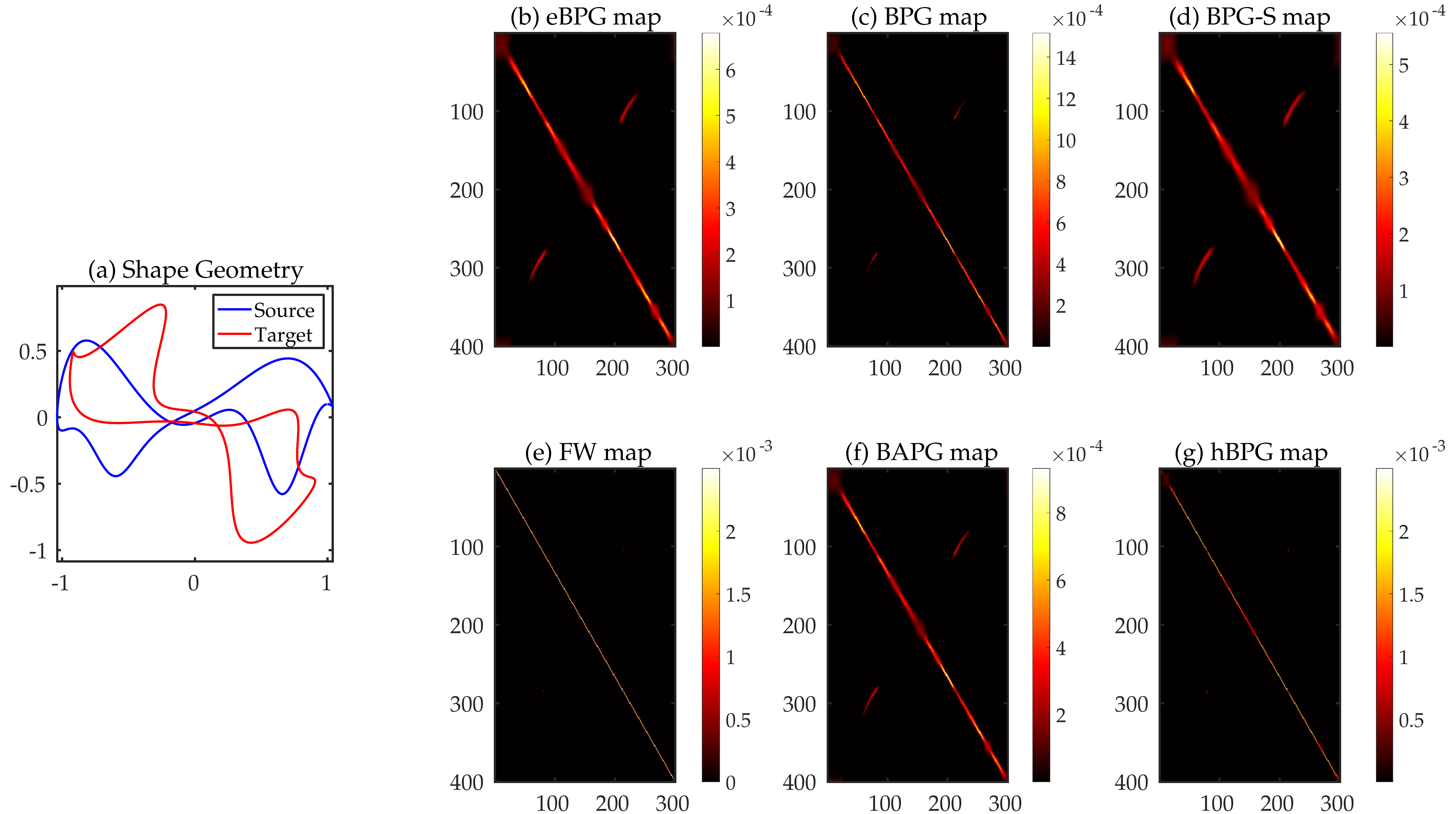}
	\caption{(a): 2D shape geometry of the source and target; (b)-(g): visualization of coupling matrix.} %\tjh{todo: add hBPG
\label{fig:syn}
\end{figure*}

Figs.\ref{fig:syn} (b)-(g) provide all color maps of coupling matrices to visualize the matching results. Here, the sparser coupling matrices indicate sharper mapping. All experiment results are consistent with Table \ref{tab:algo}. We can observe that both BPG and FW give us satisfactory solution performance as they aim at solving the GW problem exactly. However, FW and BPG will suffer from a huge computation burden which will be further justified in Sec \ref{sec:graph_align} and \ref{sec:graph_partition}. On the other hand, the performance of eBPG, BPG-S and BAPG are obviously harmed by not solving the original GW problem or the infeasibility issue. The sharpness of BAPG's coupling matrix is relatively not effected by its infeasibility issue too much although its coupling matrix is denser than BPG and FW ones. 
As we shall see later, the effect of the infeasibility issue is minor when the penalty parameter $\rho$ is not too small and will not  even result in even a real cost for graph alignment and partition tasks, which only care about the matching accuracy instead of the sharpness of the coupling.
Moreover, the characteristics of eBPG and BPG motivate us to apply hBPG for shape correspondence problems. Instead, BAPG shows its potential for being applied to graph alignment and graph partition tasks, which only care about the matching accuracy instead of the sharpness of the coupling.

% The performance of BAPG on the accuracy of mapping falls between \{FW, BPG\} and \{eBPG, BPG-S\} but the computational cost is sometimes comparable with BPG-S and FW, even faster in later graph alignment and partition tasks due to its GPU-friendly property. As we shall see later, the infeasibility issue does not matter at all and will not result in even a real cost for graph alignment and partition tasks which only care about the matching accuracy instead of the sharpness of the coupling. 

%  Figs.\ref{fig:syn} (b)-(d) provide all color maps of coupling matrices to visualize the matching results. Here, the sparser coupling matrices indicate sharper mapping. Obviously, eBPG suffers from the infeasibility issue but instead, BPG gives us the most satisfactory solution performance.  In summary, BPG outperforms eBPG and BAPG concerning the accuracy of mapping but it suffers from a huge computation burden. The performance of BAPG on computational cost and accuracy of mapping falls between eBPG and BPG. 
% Therefore, the characteristics of eBPG and BPG motivate us to apply hBPG for shape correspondence problems.  Instead, BAPG shows its potential for being applied to  graph alignment and graph partition tasks.

\subsection{Graph Alignment}\label{sec:graph_align} 
Graph alignment aims to identify the node correspondence between two graphs possibly with different topology structures~\citep{zhang2021balancing,chen2020consistent}. Instead of solving the restricted quadratic assignment problem~\citep{lawler1963quadratic,lacoste2006word}, the GW distance provides the optimal probabilistic correspondence relationship via preservation of the isometric property. 
Here, we compare the proposed BAPG and hBPG with all existing baselines: FW~\citep{titouan2019optimal}, BPG~\citep{xu2019gromov}, BPG-S~\citep{xu2019gromov} (i.e., the only difference between BPG and BPG-S is that the number of inner iterations  for BPG-S is just one), ScalaGW~\citep{xu2019scalable}, SpecGW~\citep{chowdhury2021generalized}, and eBPG~\citep{solomon2016entropic}. Except for BPG and eBPG, others are pure heuristic methods without any theoretical guarantee. Besides the GW-based methods, we also consider three widely used non-GW graph alignment baselines, including IPFP \citep{ipfp}, RRWM \citep{rrwm}, and SpecMethod \citep{sm}.

\paragraph{Parameters Setup} We utilize the unweighted symmetric adjacent matrices as our input distance matrices, i.e., $D_X$ and  $D_Y$. Alternatively,  SpecGW uses the heat kernel $\exp(-L)$ where $L$ is the normalized graph Laplacian matrix. We set both $\mu$ and $\nu$ to be the uniform distribution.  For three heuristic methods --- BPG-S, ScalaGW, and SpecGW, we follow the same setup reported in their papers. As mentioned, eBPG is very sensitive to the entropic regularization parameter. To get comparable results, we report the best result among the set $\{0.1, 0.01, 0.001\}$ of the regularization parameter. For BPG and BAPG, we use the constant step size $5$ and $\rho=0.1$ respectively. For FW, we use the default implementation in the PythonOT package \citep{flamary2021pot}. 
% All the experiment results reported here were the average of 5 independent trials over different random seeds. 

\begin{table}[h]
\centering
\caption{Statistics of databases for graph alignment.}
\begin{tabular}{c|ccc}
\toprule
Dataset    & \# Samples  & Ave. Nodes  & Ave. Edges \\\midrule
Synthetic   & 300     & 1500        & 56579 \\
Proteins    & 1113    & 39.06       & 72.82 \\
Enzymes     & 600     & 32.63       & 62.14 \\
Reddit      & 500     & 375.9       & 449.3 \\\bottomrule
\end{tabular}

\label{tab:align_data}
\end{table}

 \begin{table*}[!t]
\caption{Comparison of the matching accuracy (\%) and wall-clock time (seconds) on graph alignment. For BAPG, we also report the time of GPU implementation.}
\centering
\small
\resizebox{\linewidth}{!}{
\begin{tabular}{c|cr|ccr|ccr|ccr}
\toprule

\multirow{2}{*}{Method} &\multicolumn{2}{c|}{Synthetic} & \multicolumn{3}{c|}{Proteins}  & \multicolumn{3}{c|}{Enzymes} & \multicolumn{3}{c}{Reddit} \\
            & Acc & Time & Raw & Noisy & Time & Raw & Noisy & Time & Raw & Noisy & Time  \\\midrule
IPFP         & - & - & 43.84 & 29.89 & 87.0 & 40.37 & 27.39 & 23.7 & - & - & -\\
RRWM         & - & - & 71.79 & 33.92 & 239.3& 60.56 & 30.51 & 114.1 & - & - & -\\
SpecMethod   & - & - & 72.40 & 22.92 & 40.5 & 71.43 & 21.39 & 9.6 & - & - & -\\\midrule
FW & 24.50 & 8182 & 29.96 & 20.24 & 54.2 & 32.17 & 22.80 & 10.8 & 21.51 & 17.17 & 1121\\
ScalaGW & 17.93& 12002& 16.37 & 16.05 &372.2  & 12.72 & 11.46 &213.0&  0.54 & 0.70 &1109\\
SpecGW & 13.27&  1462& 78.11 & 19.31 &\textbf{30.7}  & 79.07 & 21.14 & \textbf{6.7}& 50.71 &19.66 &1074\\
eBPG & 34.33&  9502& 67.48 & 45.85 &208.2 & 78.25 & 60.46 &499.7&  3.76 & 3.34 &1234\\
BPG         & 57.56& 22600 & 71.99 & 52.46 &130.4 & 79.19 & 62.32 &73.1& 39.04 &36.68 &1907\\
BPG-S & 61.48& 18587& 71.74 & 52.74 &40.4  & 79.25 & 62.21 &13.4& 39.04 &36.68 &1431\\\midrule
hBPG        & 51.57 & 13279 & 70.07 & 49.01 & 245.9 & 78.57 & 62.26 & 560.0 & 47.15 & 45.58 & 1447\\
BAPG        & \textbf{99.79}&  9024& \textbf{78.18} & \textbf{57.16} &59.1  & \textbf{79.66} & \textbf{62.85} &14.8& \textbf{50.93} &\textbf{49.45} & 780 \\
BAPG-GPU    & - & \textbf{1253} & - & - & 75.4 & - & - & 21.8 & - & - &\textbf{115}\\\bottomrule
\end{tabular}
}
\label{tab:align_result} 
\end{table*}

\paragraph{Database Statistics} We test all methods on both synthetic and real-world databases. Our setup for the synthetic database is the same as in \citep{xu2019gromov}. The source graph $\mathcal G_s = \{\mathcal V_s, \mathcal E_s\}$ is generated by two ideal random models, Gaussian random partition and Barabasi-Albert models, with different scales, i.e., $|\mathcal V_s| \in \{500, 1000, 1500, 2000, 2500\}$.
Then, we generate the target graph $\mathcal G_t=\{\mathcal V_t, \mathcal E_t\}$ by first adding $q\%$ noisy nodes to the source graph, and then generating $q\%$ noisy edges between the nodes in $\mathcal V_t$, i.e., $|\mathcal V_t|=(1+q\%)|\mathcal V_s|, |\mathcal E_t|=(1+q\%)|\mathcal E_s|$, where $q\in \{0, 10, 20, 30, 40, 50\}$. For each setup, we generate five synthetic graph pairs over different random seeds. To sum up, the synthetic database contains 300 different graph pairs. We also validate our proposed methods on other three real-world databases from \citep{chowdhury2021generalized}, including two biological graph databases \textit{Proteins} and \textit{Enzymes}, and a social network database \textit{Reddit}. Furthermore, to demonstrate the robustness of our method regarding the noise level, we follow the noise generating process (i.e., $q = 10\%$) conducted for the synthesis case to create new databases on top of the three real-world databases. Towards that end, the statistics of all databases used for the graph alignment task have been summarized in Table \ref{tab:align_data}. We match each node in $\mathcal G_s$ with the most likely node in $\mathcal G_t$ according to the optimized $\pi^\star$. Given the predicted correspondence set $\mathcal S_{\text{pred}}$ and the ground-truth correspondence set $\mathcal S_{\text{gt}}$, we calculate the matching accuracy by $\textnormal{Acc}=|\mathcal S_{\text{gt}} \cap \mathcal S_{\text{pred}}|/|\mathcal S_{\text{gt}}|\times 100\%$.

\paragraph{Results of Our Methods} Table \ref{tab:align_result} shows the
comparison of matching accuracy and wall-clock time on four databases. We observe that BAPG works exceptionally well both in terms of computational time and accuracy, 
especially for two large-scale noisy graph databases \textit{Synthetic} and \textit{Reddit}. Notably, BAPG is robust enough so that it is not necessary to perform parameter tuning.
As we mentioned in Sec \ref{sec:method}, the effectiveness of GPU acceleration for BAPG is also well corroborated on \textit{Synthetic} and \textit{Reddit}. GPU cannot further speed up the training time of \textit{Proteins} and \textit{Reddit} as graphs in these two databases are small-scale. 
Additional experiment results to demonstrate the robustness of BAPG and its GPU acceleration will be given in Sec \ref{sec:sentiv}.
Observed from Table \ref{tab:align_result}, the performance of hBPG is in general the interpolation between eBPG and BPG in terms of modeling performance and wall-clock time. 

% However, GPU cannot further speed up the training time of \textit{Proteins} and \textit{Reddit} as graphs in these two databases are small-scale. 

\paragraph{Comparison with Other Methods} Traditional non-GW graph alignment methods (IPFP, RRWM, and SpecMethod) have the out-of-memory issue on graphs with more than 500 nodes (e.g., Synthetic and Reddit) and are sensitive to the noise. The performance of eBPG and ScalaGW are influenced by the entropic regularization parameter and approximation error respectively, which accounts for their poor performance. Moreover, it is easy to observe that SpecGW works pretty well on the small dataset but the performance degrades dramatically on the large one, e.g., \textit{synthetic}.  The reason is that SpecGW relies on a linear programming solver as its subroutine, which is not well-suited for large-scale settings. Besides, although ScalaGW has the lowest per-iteration computational complexity, the recursive K-partition mechanism developed in \citep{xu2019scalable} is not friendly to parallel computing. Therefore, ScalaGW does not demonstrate attractive performance on multi-core processors.

\subsection{Graph Partition}\label{sec:graph_partition}
The GW distance can also be potentially applied to the graph partition task. That is, we are trying to match the source graph with a disconnected target graph having $K$ isolated and self-connected super nodes, where $K$ is the number of clusters~\citep{abrishami2020geometry}. Similarly, we compare the proposed BAPG and hBPG with other baselines described in Section \ref{sec:graph_align} on four real-world graph partitioning datasets. Following \citep{chowdhury2021generalized}, we also add three non-GW methods specialized in graph alignment, including FastGreedy \citep{clauset2004finding}, Louvain \citep{blondel2008fast}, and Infomap \citep{rosvall2008maps}.

\paragraph{Parameters Setup} For the input distance matrices $D_X$ and $D_Y$, we test our methods on both the adjacent matrices and the heat kernel matrices proposed in \citep{chowdhury2021generalized}. For BAPG, we pick up the lowest converged function value among $\rho \in\{0.1,0.05,0.01\}$ for adjacent matrices and $\rho \in\{0.01,0.005,0.001\}$ for heat kernel matrices. The quality of graph partition results is quantified by computing the adjusted mutual information (AMI) score \citep{vinh2010information} against the ground-truth partition. 
\begin{table*}[]
\centering
\caption{Comparison of AMI scores on graph partition datasets using the adjacent matrices and the heat kernel matrices.}
\resizebox{\linewidth}{!}{
\begin{tabular}{cc|cccccccc}
\toprule
\multirow{2}{*}{Matrices} & \multirow{2}{*}{Method} & \multicolumn{2}{c}{Wikipedia}   & \multicolumn{2}{c}{EU-email}    & \multicolumn{2}{c}{Amazon}      & \multicolumn{2}{c}{Village}     \\
                           &                          & Raw            & Noisy          & Raw            & Noisy          & Raw            & Noisy          & Raw            & Noisy          \\ \midrule
\multirow{3}{*}{Non-GW}& 
FastGreedy  &0.382&0.341&0.312&0.251&0.637&0.573&\textbf{0.881}&0.778\\
& Louvain   &0.377&0.329&0.447&0.382&0.622&\textbf{0.584}&\textbf{0.881}&\textbf{0.827}\\
& Infomap   &0.332&0.329&0.374&0.379&\textbf{0.940}&0.463&\textbf{0.881}&0.190\\\midrule
\multirow{5}{*}{Adjacent}  & FW &0.341 & 0.323 & 0.440 & 0.409 & 0.374 & 0.338 & 0.684 & 0.539\\
                           & eBPG                     & 0.461          & \textbf{0.413} & \textbf{0.517} & 0.422 & 0.429          & 0.387          & 0.703          & 0.658          \\
                           & BPG                      & 0.367          & 0.333          & 0.478          & 0.414          & 0.412          & 0.368          & 0.642          & 0.575          \\
                           & BPG-S                      & 0.357          & 0.285          & 0.451          & 0.404          & \textbf{0.443} & 0.352          & 0.606          & 0.560          \\

                            & hBPG                    & 0.368                           & 0.333                           & 0.527                           & 0.423                           & 0.435                           & 0.387                           & 0.655                           & 0.554                           \\

                           & BAPG                     & \textbf{0.468} & 0.385          & 0.508          & \textbf{0.428}          & 0.436          & \textbf{0.426} & \textbf{0.709} & \textbf{0.681} \\ \midrule
\multirow{5}{*}{Heat Kernel}  & SpecGW                  & 0.442          & 0.395          & 0.487          & 0.425          & 0.565          & 0.487          & 0.758          & 0.707          \\
                           & eBPG                     & 0.000          & 0.000          & 0.000          & 0.000          & 0.000          & 0.000          & 0.000          & 0.000          \\
                           & BPG                      & 0.405          & 0.373          & 0.473          & 0.253          & 0.492          & 0.436          & 0.705          & 0.619          \\
                            & BPG-S                      & 0.411          & 0.373          & 0.475          & 0.253          & 0.483          & 0.425          & 0.642          & 0.619          \\

                            & hBPG                    & 0.497                           & 0.387                           & 0.166                           & 0.059                           & 0.477                           & 0.389                           & 0.782                           & 0.727\\

                           & BAPG                     & \textbf{0.529} & \textbf{0.397} & \textbf{0.533} & \textbf{0.436} & \textbf{0.609} & \textbf{0.505} & \textbf{0.797} & \textbf{0.711} \\ \bottomrule
\end{tabular}
}
\label{tab:partition_result} 
\end{table*}

\paragraph{Results of All Methods} Table \ref{tab:partition_result} shows the comparison of AMI scores among all methods on graph partition. BAPG outperforms other GW-based methods in most cases and is more robust to the noisy setting. Specifically, BAPG is consistently better than FW and SpecGW which both rely on the Frank-Wolfe method to solve the problem. eBPG has comparable results using the adjacency matrices but is unable to process the spectral matrices. The possible reason is that the adjacency matrix and the heat kernel matrix admit quite different structures, e.g., the former is sparse while the latter is dense. The performance of hBPG is in general the
interpolation between eBPG and BPG. BPG and BPG-S enjoy similar performances in most cases, but both are not as good as our proposed BAPG on all datasets. Except for BPG and eBPG, others are pure heuristic methods without any theoretical guarantee. BAPG also shows competitive performance compared to the specialized non-GW graph partition methods. For example, BAPG outperforms Infomap in 6 out of 8 scores.

\subsection{Shape Correspondence}\label{sec:shape}

\begin{figure*}[h!]
	\centering
	\vspace{-3mm}
	\subfigure[Source Surface]{\includegraphics[width=0.24\columnwidth]{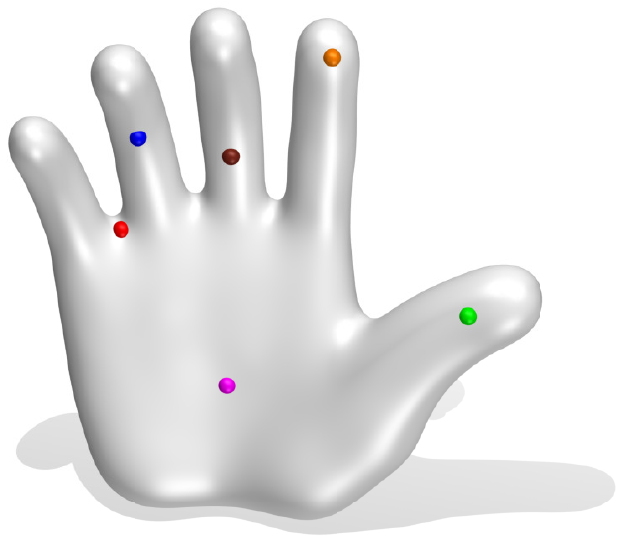}}
	\subfigure[eBPG]{\includegraphics[width=0.16\textwidth]{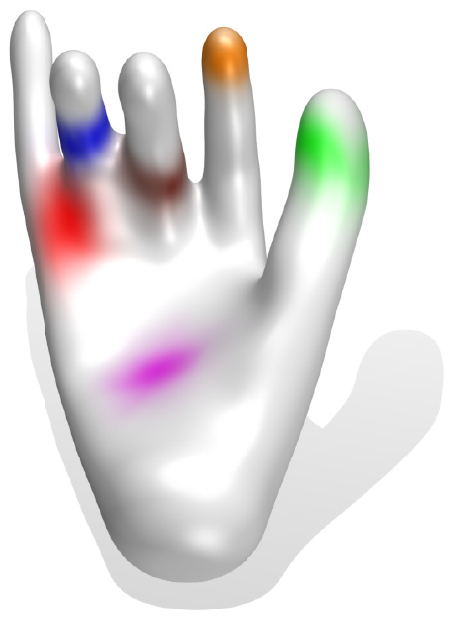}} 
	\subfigure[BAPG]{\includegraphics[width=0.16\textwidth]{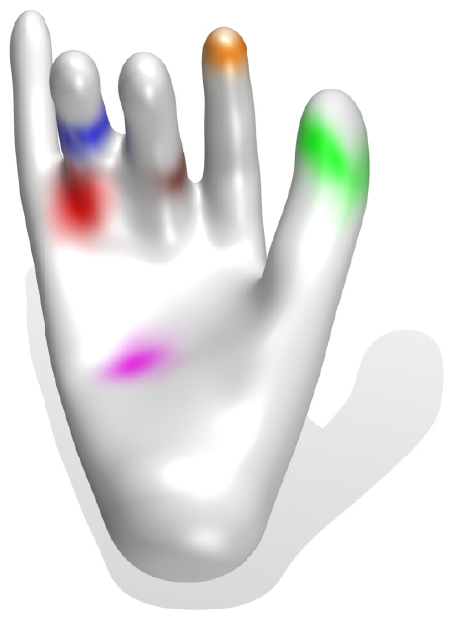}} 
	\subfigure[BPG]{\includegraphics[width=0.16\textwidth]{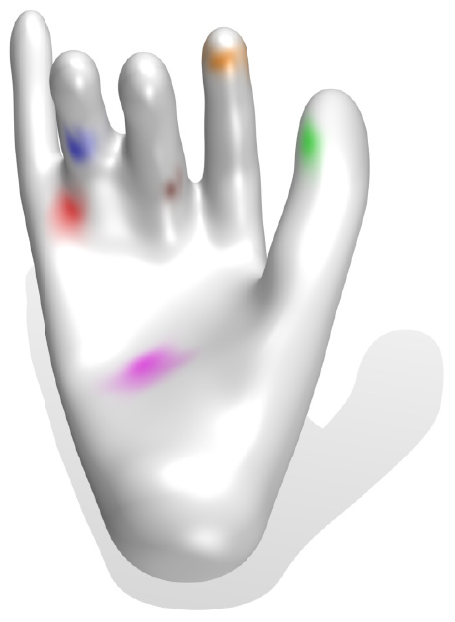}} 
			\vspace{-3mm}
	\subfigure[hBPG]{\includegraphics[width=0.16\textwidth]{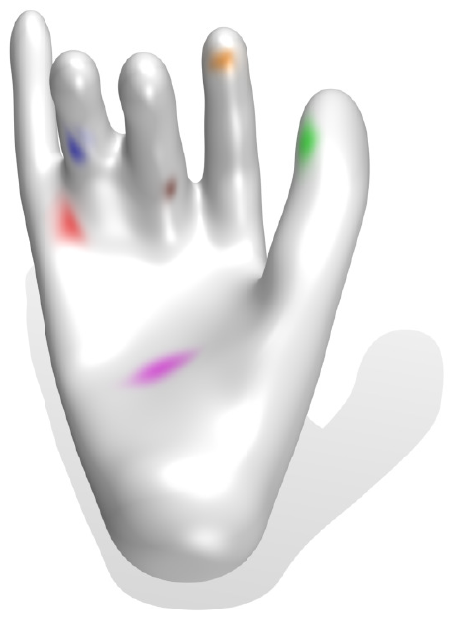}} 
% 			\vspace{-3mm}
	\subfigure[Source Surface]{\includegraphics[width=0.22\textwidth]{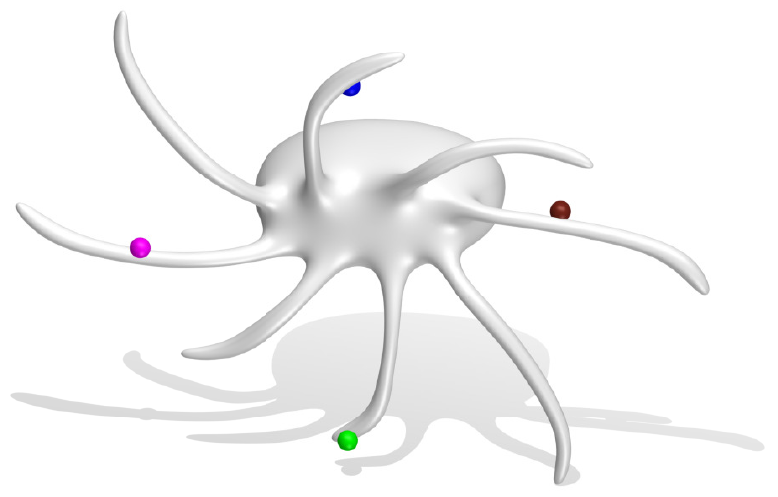}} 
	\subfigure[eBPG]{\includegraphics[width=0.18\textwidth]{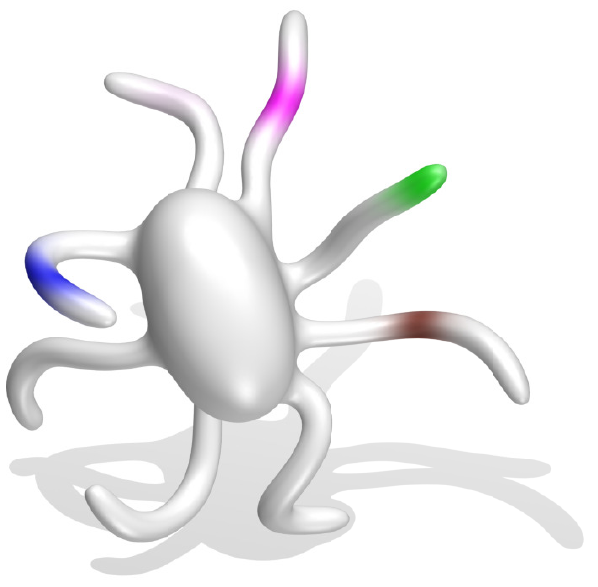}} 
	\subfigure[BAPG]{\includegraphics[width=0.18\textwidth]{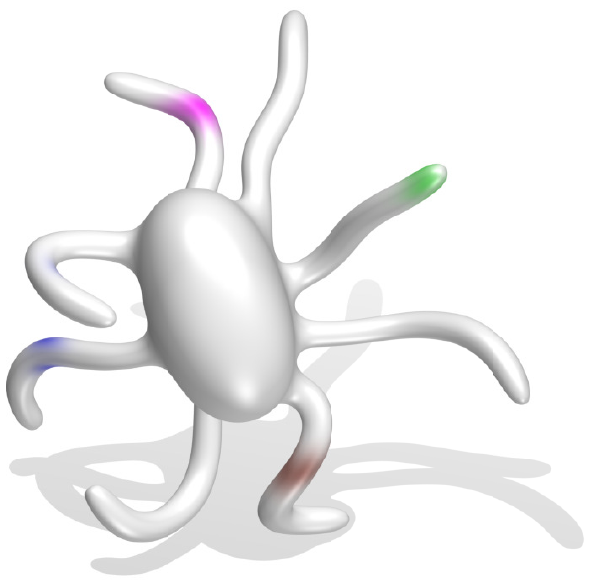}} 
	\subfigure[BPG]{\includegraphics[width=0.18\textwidth]{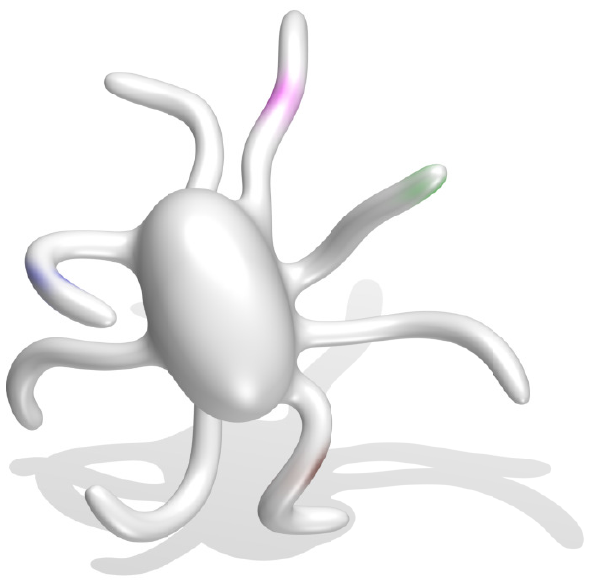}} 
	\subfigure[hBPG]{\includegraphics[width=0.18\textwidth]{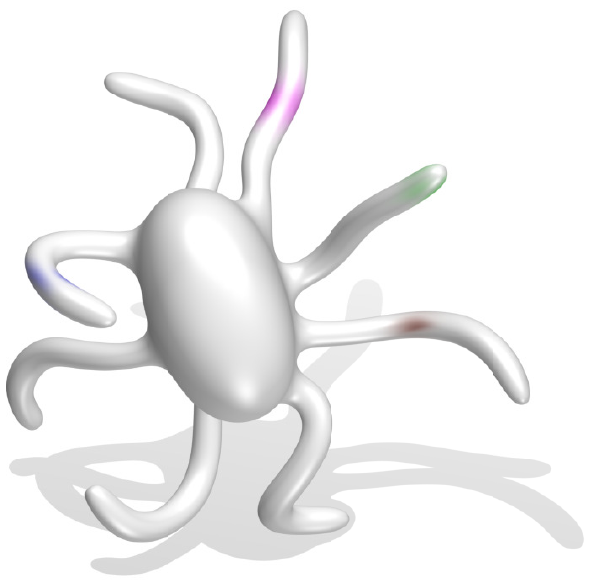}} 
		\vspace{-3mm}
	\subfigure[Source Surface]{\includegraphics[width=0.2\textwidth]{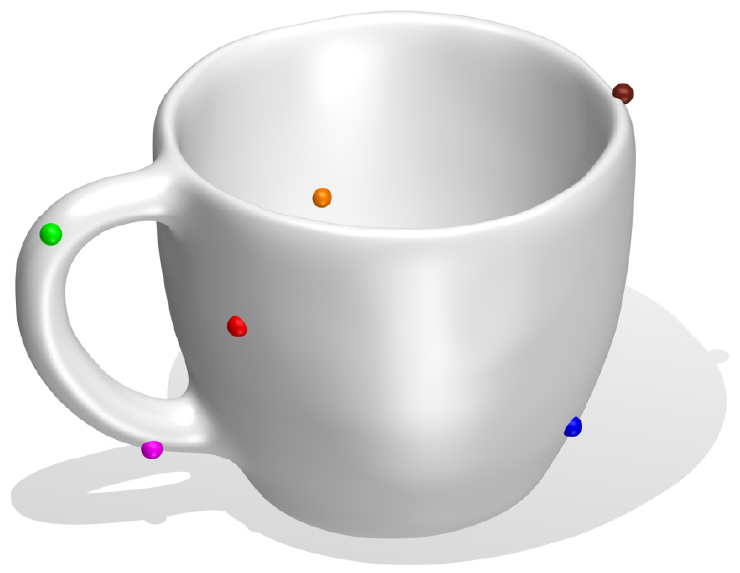}}
	\subfigure[eBPG]{\includegraphics[width=0.18\textwidth]{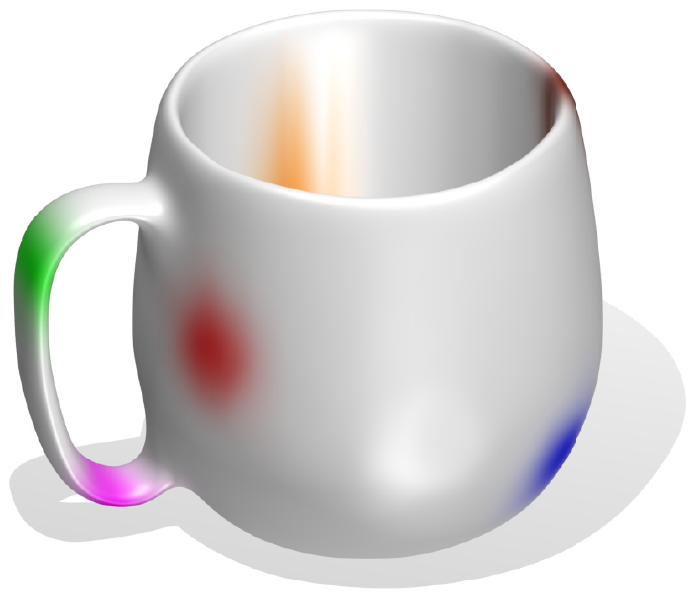}} 
	\subfigure[BAPG]{\includegraphics[width=0.18\textwidth]{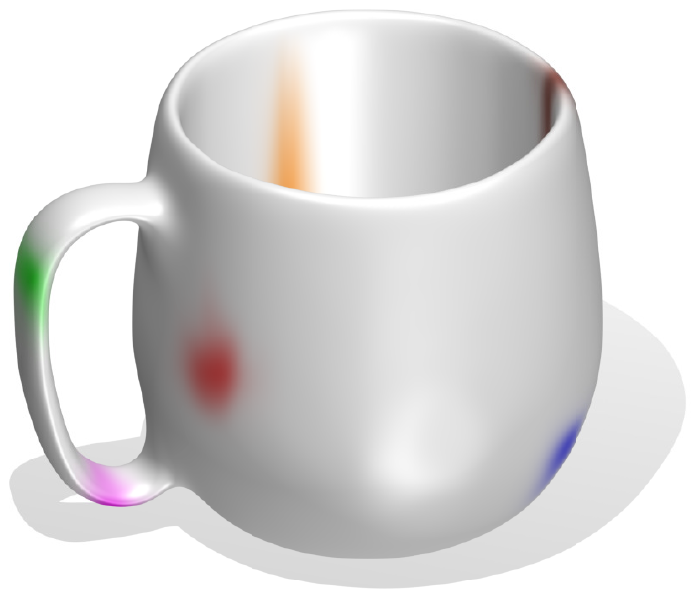}} 
	\subfigure[BPG]{\includegraphics[width=0.18\textwidth]{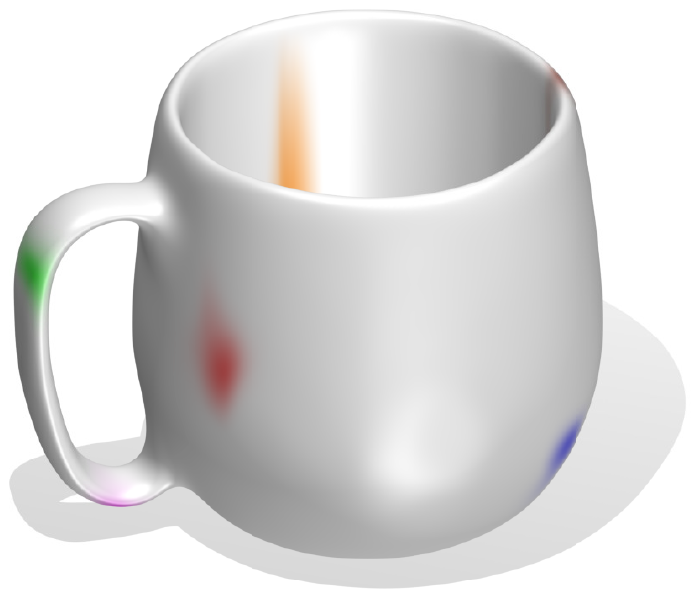}} 
	\subfigure[hBPG]{\includegraphics[width=0.18\textwidth]{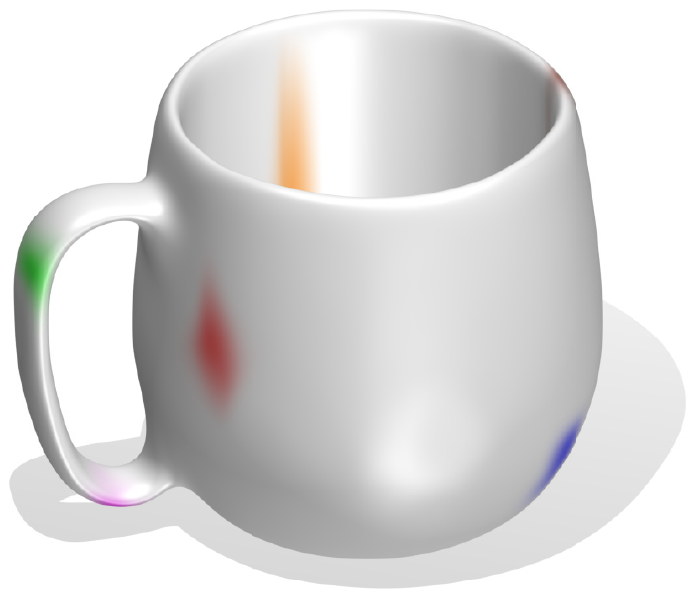}} 
	\subfigure[Source Surface]{\includegraphics[width=0.18\textwidth, height=0.2\textwidth]{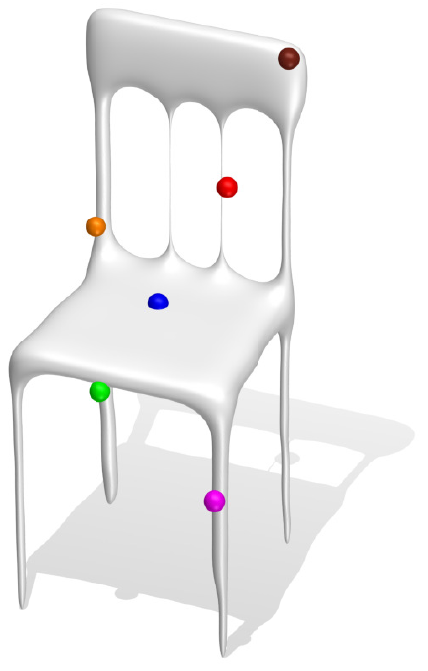}}
	\subfigure[eBPG]{\includegraphics[width=0.16\textwidth,height=0.22\textwidth]{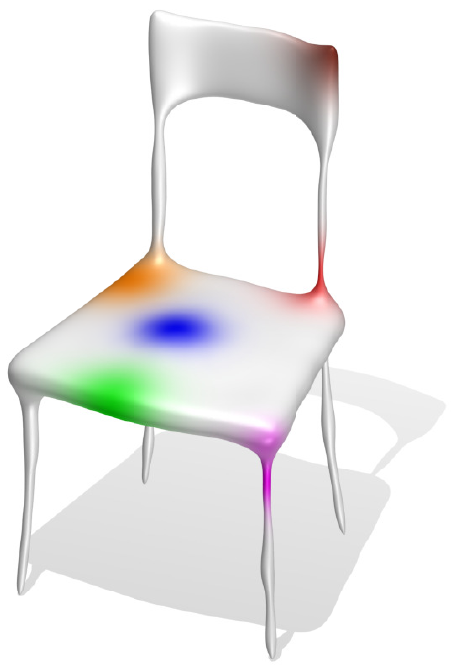}} 
	\subfigure[BAPG]{\includegraphics[width=0.16\textwidth,height=0.22\textwidth]{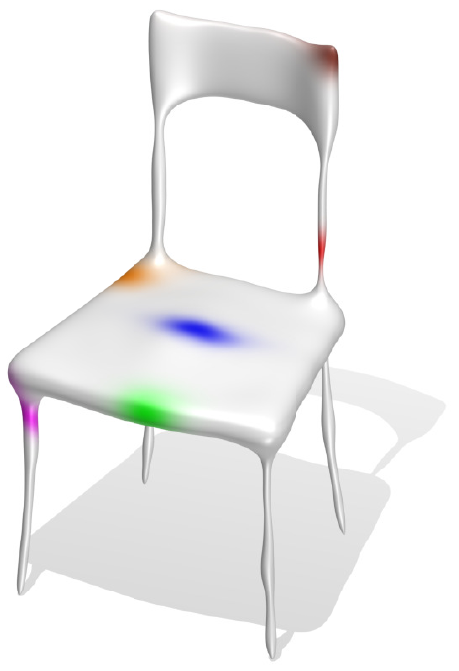}} 
	\subfigure[BPG]{\includegraphics[width=0.16\textwidth,height=0.22\textwidth]{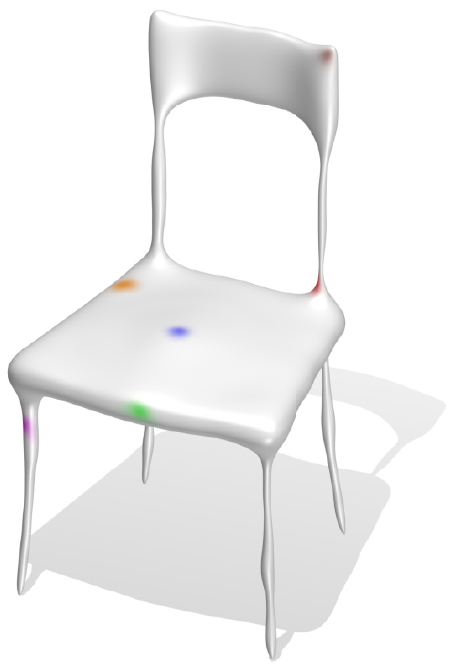}} 
	\subfigure[hBPG]{\includegraphics[width=0.16\textwidth,height=0.22\textwidth]{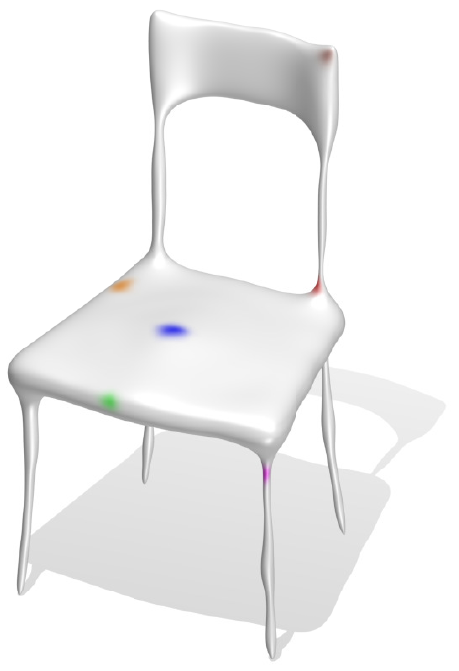}} 
	\subfigure[Source Surface]{\includegraphics[width=0.19\textwidth,height=0.22\textwidth]{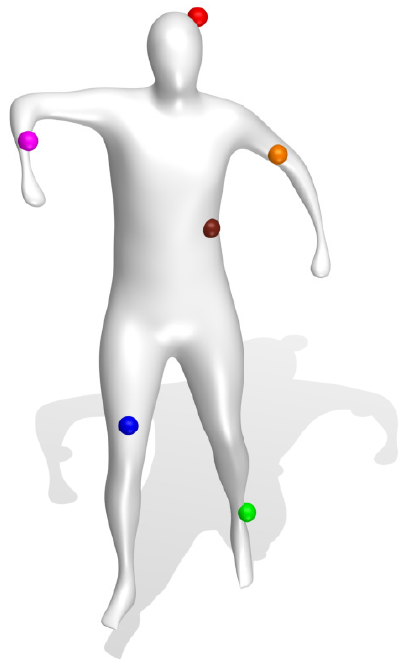}} 
	\subfigure[eBPG]{\includegraphics[width=0.14\textwidth,height=0.24\textwidth]{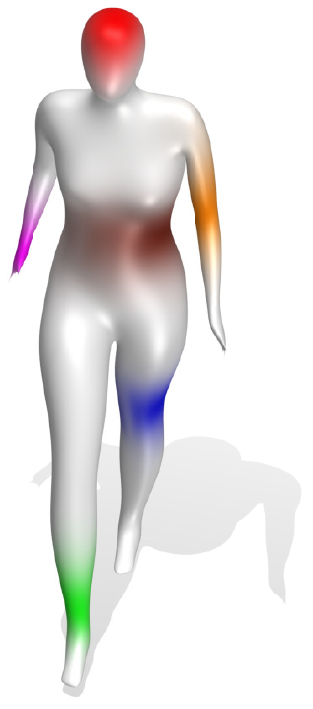}} 
	\subfigure[BAPG]{\includegraphics[width=0.14\textwidth,height=0.24\textwidth]{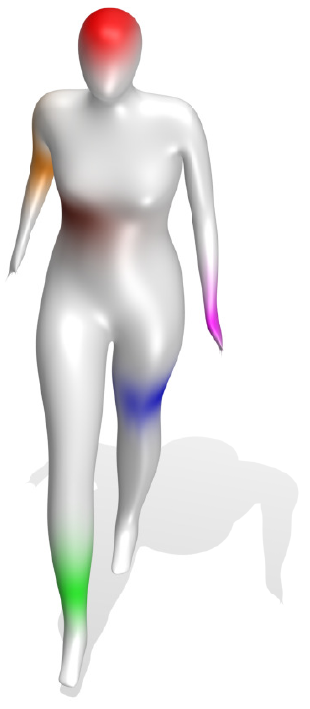}} 
	\subfigure[BPG]{\includegraphics[width=0.14\textwidth,height=0.24\textwidth]{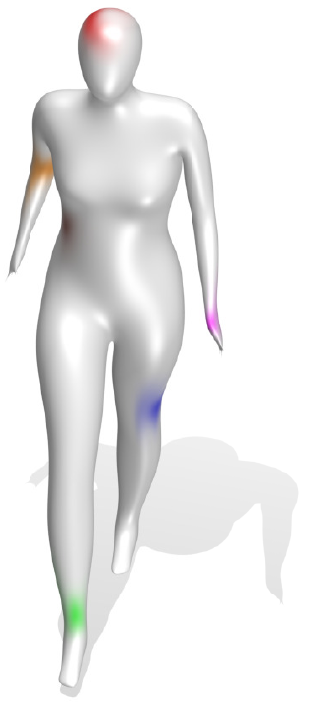}} 
	\subfigure[hBPG]{\includegraphics[width=0.14\textwidth,height=0.24\textwidth]{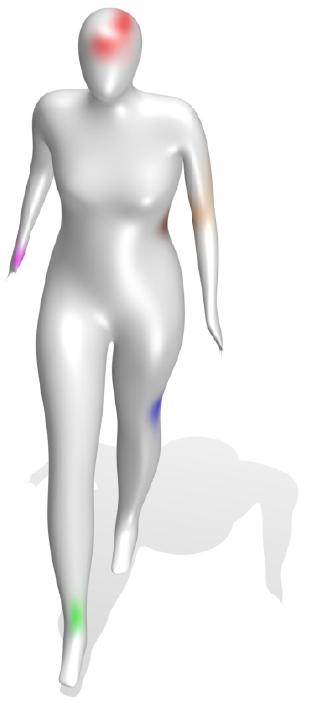}}
	\caption{Visualization of the matching results: colored points (source surface) are mapped to colored distributions on the target object.} \label{fig:hand}
\end{figure*}

\begin{table*}[t]
\caption{Comparison of the infeasibility error (i.e.,$\tfrac{\|\pi^T\mathbf{1}_n -\nu\|}{m}+\tfrac{\|\pi \mathbf{1}_m -\mu\|}{n}$ ) and CPU wall-clock time.}
\resizebox{\textwidth}{!}{
\begin{tabular}{@{}c|cc|cc|cc|cc|cc@{}}
\toprule
\multirow{2}{*}{Method} & \multicolumn{2}{c|}{Hand} & \multicolumn{2}{c|}{Octopus} & \multicolumn{2}{c|}{Mug} & \multicolumn{2}{c|}{Chair} & \multicolumn{2}{c}{Human} \\
                        & Error        & Time       & Error          & Time        & Error        & Time      & Error         & Time       & Error         & Time      \\ \midrule
eBPG                    & 2.95e-10     & 9.37       & 7.21e-10       & 12.08       & 6.41e-10     & 27.86     & 1.77e-10      & 12.87      & 5.52e-10      & 11.46     \\
BPG                     & 3.00e-07     & 389.72     & 2.00e-07       & 32.55       & 3.50e-07     & 196.93    & 4.00e-07      & 304.98     & 2.53e-07      & 93.27     \\
hBPG                    & 3.00e-07     & 193.85     & 2.00e-07       & 23.14        & 3.50e-07     & 90.59     & 4.00e-07      & 189.41     & 2.53e-07      & 53.29     \\
BAPG                    & 4.54e-06     & 61.77      & 2.14e-05       & 6.19        & 6.62e-05     & 30.83     & 2.13e-05      & 127.78     & 5.62e-05      & 8.26      \\
BAPG-GPU                & -            & 3.10       & -              & 1.28        & -            & 1.39      & -             & 3.22       & -             & 0.78      \\ \bottomrule
\end{tabular}
}
\vspace{-1.2\baselineskip}
\label{tab:geometry_result} 
\end{table*}

Finally, we evaluate the matching performance and computational cost of our proposed  hBPG and BAPG on five 3D triangle mesh datasets used in \cite{solomon2016entropic}.
By making use of gptoolbox~\citep{jacobson2018gptoolbox}, $D_X$ and $D_Y$ are constructed by computing the geodesic distances over the triangle mesh.  Here, $\mu$ and $\nu$ are discrete uniform distributions. All the algorithmic parameters of eBPG, BPG, BAPG and hBPG have been fine-tuned via grid search for optimal performance.
%For BAPG, we attempt $\rho \in \{0.05, 0.1, 0.2, 0.8\}$ for the fixed step size and report the best result. For BPG and hBPG, we choose the fixed step size $t_k \in \{30, 50, 100, 1000\}$ and report the best result.  For eBPG, we use the same setup in \cite{solomon2016entropic}. 

Different from graph alignment and partition, the shape matching task cares more about the sharpness of the correspondence relationship. To visualize this soft matching result, we label several color points on the source surface and then quantify the sharpness of each mapping distribution via the size of the colored area on the target surface. In fact, the smaller area indicates a sharper mapping. Regarding the accuracy and sharpness, the superiority of hBPG and BPG over other methods are obviously observed from Fig. \ref{fig:hand}. If we further take the computational burden into account, hBPG is the best-fit for this task, see Table \ref{tab:geometry_result} for details. hBPG can achieve a great trade-off between efficiency and accuracy. Although BAPG shows attractive advantages in terms of computational cost, it suffers from the infeasibility issue, which is also corroborated by the experiment results in Table \ref{tab:geometry_result}. The visualization results of the other four 3D mesh objects are given in Appendix.

\subsection{The Effectiveness and Robustness of BAPG}
\label{sec:sentiv}
At first, we target at demonstrating the robustness of BAPG on graph alignment, as it would be more reasonable to test the robustness of a method on a database (e.g., graph alignment) rather than a single point (e.g., graph partition).

\paragraph{Noise Level and Graph Scale} At the beginning, we report the sensitivity analysis of BAPG regarding the noise level $q\%$ and the graph scale $|\mathcal V_s|$ in Fig. \ref{fig:noise} on the Synthetic Database. Surprisingly, the solution performance of BAPG is robust to both the noise level and graph scale. In contrast, the accuracy of other methods degrades dramatically as the noise level or the graph scale increases. 

\paragraph{Step Size $\rho$}
Subsequently, we conduct the sensitivity analysis of BAPG with respect to the step size $\rho$ on four real-world databases. Table \ref{tab:sensit} showcases that the matching performance of BAPG is stable and robust (not sensitive) regarding the step size.
Moreover, convergence curves of different  $\rho$ in Fig. \ref{fig:noise} corroborate Proposition \ref{prop:bapg_approx} empirically and further guide us on how to choose the step size. The larger $\rho$ means the smaller infeasibility error but slower convergence speed. Together with Table \ref{tab:sensit}, we can conclude that if the resulting infeasibility error is not quite large (i.e., the step size $\rho$ is not so small),  the final matching accuracy is robust to the step size. Under this situation, the step size only affects the convergence speed. 

\begin{table}[]
\caption{The sensitivity analysis of BAPG with different step sizes $\rho$ on graph alignment.}
\resizebox{\linewidth}{!}{
\begin{tabular}{c|cr|ccr|ccr|ccr}
\toprule
\multirow{2}{*}{Method} &\multicolumn{2}{c|}{Synthetic} & \multicolumn{3}{c|}{Proteins}  & \multicolumn{3}{c|}{Enzymes} & \multicolumn{3}{c}{Reddit} \\
                               & Acc       & Time & Clean   & Noise & Time & Clean  & Noise & Time & Clean  & Noise & Time \\\midrule
BAPG ($\rho$=0.5)  & 94.58     & 4610 & 77.21   & 57.69 & 70.8 & 79.20  & 63.18 & 21.8 & 50.04  & 49.19 & 257  \\
BAPG ($\rho$=0.2)  & 98.87     & 2074 & 78.27   & 57.96 & 67.4 & 79.50  & 63.06 & 17.6 & 50.94  & 50.46 & 144  \\
BAPG ($\rho$=0.1)  & 99.79     & 1253 & 78.18   & 57.16 & 59.1 & 79.66  & 62.85 & 14.8 & 50.93  & 49.45 & 115  \\
BAPG ($\rho$=0.05) & 99.20     & 675  & 77.59   & 56.54 & 32.6 & 79.27  & 62.13 & 11.8 & 50.96  & 50.30 & 98   \\
BAPG ($\rho$=0.01) & 93.63     & 616  & 60.08   & 34.47 & 22.5 & 66.45  & 42.38 & 8.9  & 28.74  & 16.14 & 85 \\ 
\bottomrule
\end{tabular}
}
\label{tab:sensit}
\end{table}

\begin{table}[t!]
\centering
\label{tab:gpu}
\caption{GPU \& CPU wall-clock time comparison of BAPG, BPG and eBPG on graph alignment.}
\resizebox{0.7\linewidth}{!}{

\begin{tabular}{c|ccc|ccc}
\toprule
                    & \multicolumn{3}{c|}{Reddit Dataset} & \multicolumn{3}{c}{Synthetic Dataset} \\
                    & BAPG             & BPG    & eBPG   & BAPG             & BPG      & eBPG    \\\midrule
CPU Time(s) & 780              & 1907   & 1234   & 9024             & 22600    & 9502    \\
GPU  Time(s) & 115              & 1013   & 2274   & 1253             & 4458     & 2709    \\
Acceleration Ratio  & \textbf{6.78}    & 1.88   & 0.54   & \textbf{7.20}    & 5.07     & 3.51   \\\bottomrule
\end{tabular}
}
\label{tab:gpu}
\end{table}

\begin{figure*}[t!]
	\centering
	\subfigure[]{\includegraphics[width=0.66\textwidth]{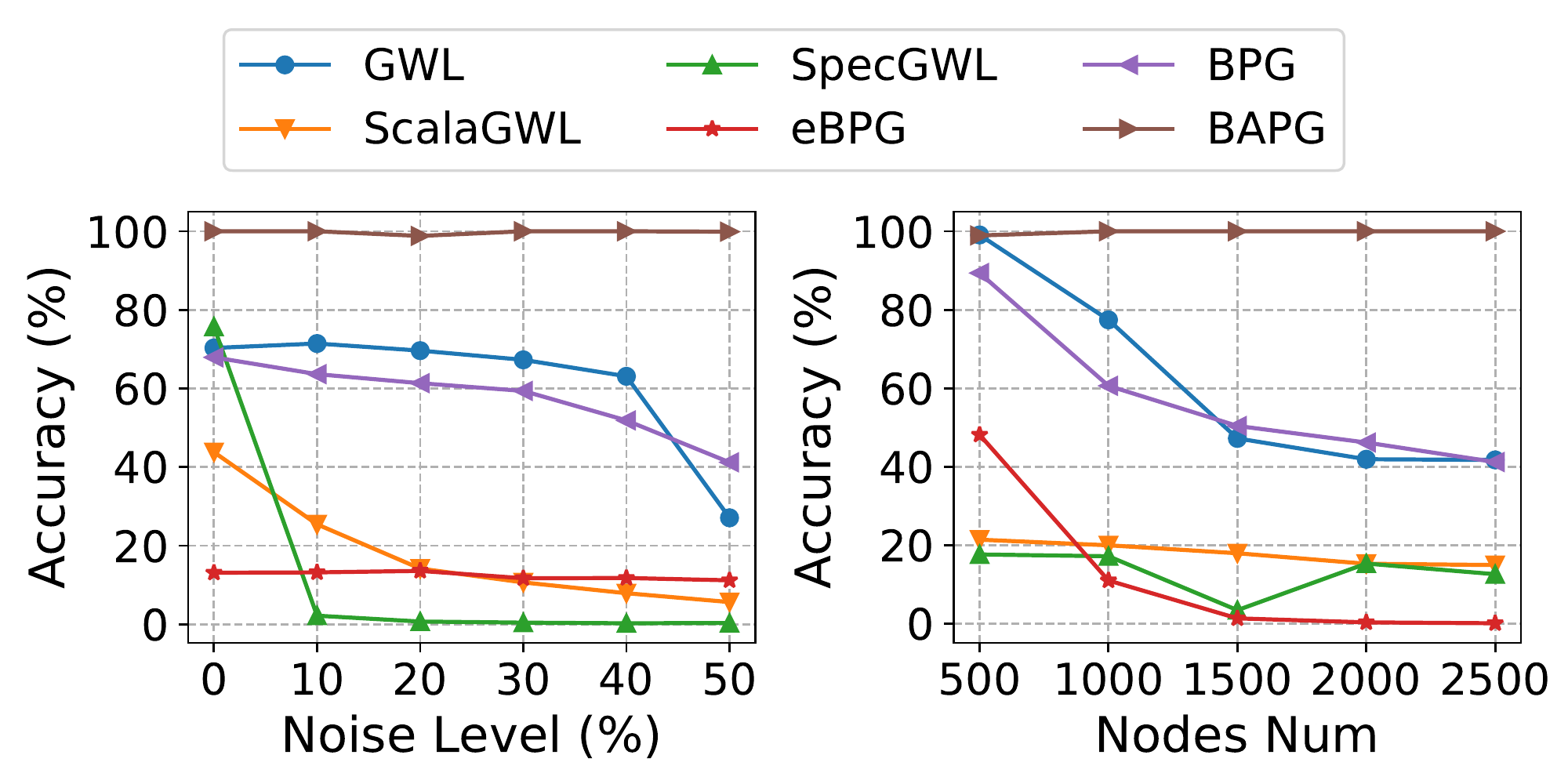}} 
	\subfigure[]{\includegraphics[width=0.33\textwidth]{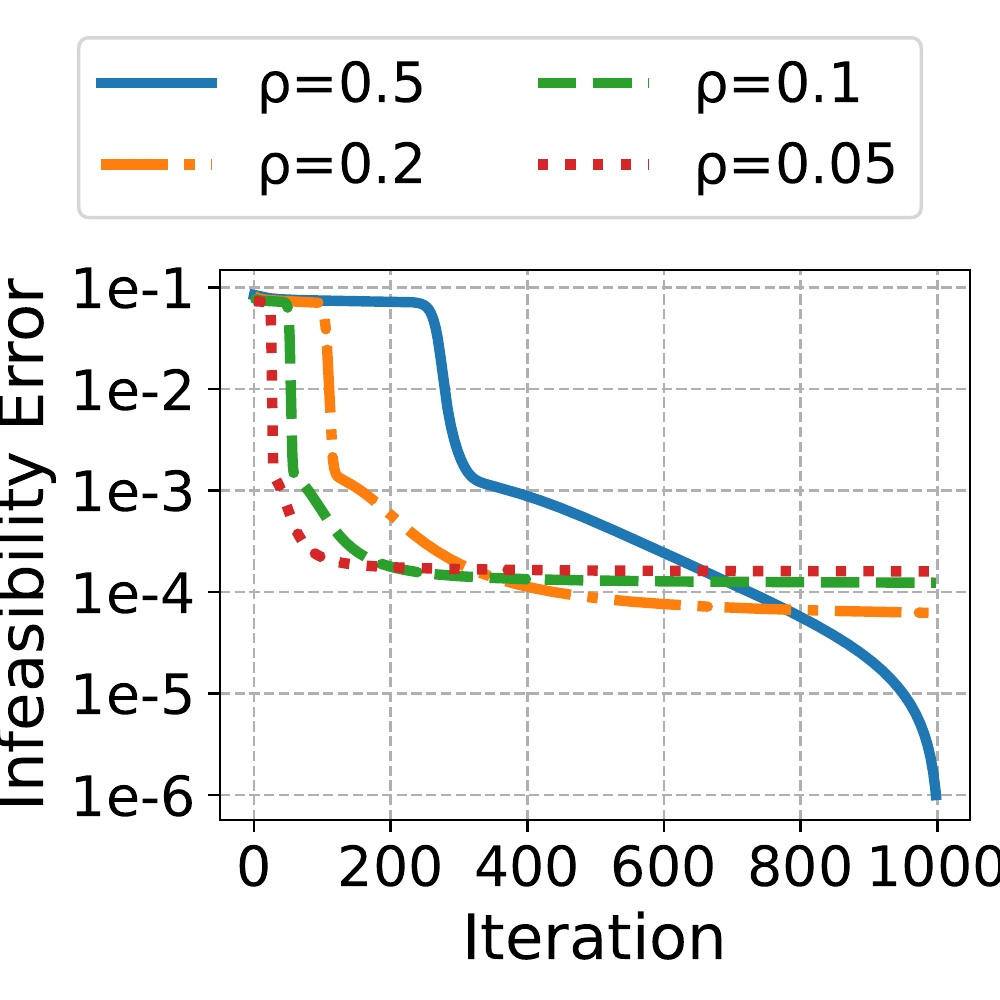}} 
\caption{(a) Sensitivity of the noise level and graph scale on the synthetic graph alignment database. (b) Sensitivity of BAPG's step size $\rho$. The infeasibility error  is defined as $\|\pi^k-w^k\|_2$. }\label{fig:noise}
\end{figure*}

\paragraph{GPU Acceleration of BAPG.} We conduct experiment results to further justify that BAPG is GPU-friendly. In Table \ref{tab:gpu}, We compare the acceleration ratio (i.e., CPU wall-clock time divided by GPU wall-clock time) of BAPG, eBPG, and BPG on two large-scale graph alignment datasets using the same computing server. For eBPG, we use the official implementation in the PythonOT package, which supports running on GPU. For BPG, we implement the GPU version by ourselves using Pytorch. We can find that BAPG has a much higher acceleration ratio on the GPU compared with BPG and eBPG.

\section{Closing Remark}
In this paper, we conduct a systematic investigation on developing provably efficient algorithms for the Gromov-Wasserstein distance computation. Two theoretically solid algorithms, called BAPG and hBPG, have been proposed to tackle different graph learning tasks. Our theoretical results provide novel insights to help users to select the well-suited algorithm for their tasks. Technically, by exploiting the error bound
condition (i.e., never investigated in the GW literature before), we are able to conduct the convergence analysis of BAPG (i.e., a special case of  alternating projected descent method). 
For a broader viewpoint, our work also takes the first step and provides a rather new path to study the alternating projected descent method for
general nonconvex problems. To the best of our knowledge, it is still an open question even in the optimization community.
Unfortunately, the proposed single-loop algorithm BAPG still suffers from the cubic per-iteration computational complexity, which will further limit its applications in real-world large-scale settings. 
A natural future direction is to consider sparse and low-rank structures of the matching matrix to decrease the per-iteration cost and further speed up our methods~\citep{scetbon2021linear}. Moreover, for simplicity, we assume that $D_X$ and $D_Y$ are two symmetric distance matrices. However, the proposed BAPG can  be potentially applied to the non-symmetric case as the Luo-tseng error bound condition still holds and algorithmic steps keep the same.

\section*{Acknowledgement}
 Material in this paper is based upon work supported by the Air Force Office
of Scientific Research under award number FA9550-20-1-0397. Additional support is gratefully
acknowledged from NSF grants 1915967 and 2118199. Jianheng Tang and Jia Li are supported by NSFC Grant 62206067 and Guangzhou-HKUST(GZ) Joint Funding Scheme. Anthony Man-Cho So is supported by in part by the Hong Kong RGC GRF project CUHK 14203920.

\newpage
\bibliography{main}

\end{document}